\documentclass[12pt,onecolumn,journal,draftclsnofoot,singlespace]{IEEEtran}
\makeatletter
\def\ps@headings{%
\def\@oddhead{\mbox{}\scriptsize\rightmark \hfil \thepage}%
\def\@evenhead{\scriptsize\thepage \hfil \leftmark\mbox{}}%
\def\@oddfoot{}%
\def\@evenfoot{}}
\makeatother \pagestyle{headings}
\usepackage{float}
\usepackage{times,amsmath,amssymb,graphicx,epsf,psfrag,epsfig,cite,color}
\usepackage{graphicx}
\usepackage{caption,subcaption}
\usepackage{multirow}
\usepackage{bbm,amsthm}





\def\boxit#1{\vbox{\hrule\hbox{\vrule\kern3pt
        \vbox{\kern3pt#1\kern3pt}\kern3pt\vrule}\hrule}}

\def\reals{ { {\rm  I \kern-0.15em R }  } }
\def\complex{ {\,{{\rm C} \kern-0.50em \raise0.20ex {  |}}\, }}

\def\Rbf{{\bf R}}

\def\Fc{{\cal F}}

\def\Uc{{\cal U}}

\def\be{\begin{equation}}
\def\ee{\end{equation}}

\def\scalefig#1{\epsfxsize #1\textwidth}
%

\def\Rxx{\Rbf_{\ssstyle X\kern-.1em X}}

\let\ssstyle=\scriptscriptstyle


\def\etal{{\it et al. \/}}

\def\Kout{\setbox1=\hbox{\Huge\bf K}\hbox to
1.05\wd1{\hspace{.05\wd1}
\def\Sout{\setbox1=\hbox{\Huge\bf S}\hbox to 1.05\wd1{\hspace{.05\wd1}

\def\scalefig#1{\epsfxsize #1\textwidth}
\def\nn{{\nonumber}}

\newtheorem{lemma}{Lemma}
\newtheorem{theorem}{Theorem}

\begin{document}

\title{\huge Risk-Averse Multi-Armed Bandit Problems under\\ Mean-Variance Measure}

\author{\small Sattar Vakili, Qing Zhao\\
\small School of Electrical and Computer Engineering,\\ Cornell University, Ithaca, NY 14850\\
\{sv388, qz16\}@cornell.edu}
\maketitle

\begin{abstract}The multi-armed bandit problems have been studied mainly under the measure of \emph{expected} total reward accrued over a horizon of length $T$. In this paper, we address the issue of \emph{risk} in multi-armed bandit problems and develop parallel results under the measure of mean-variance, a commonly adopted risk measure in economics and mathematical finance. We show that the model-specific regret and the model-independent regret in terms of the mean-variance of the reward process are lower bounded by $\Omega(\log T)$ and $\Omega(T^{2/3})$, respectively. We then show that variations of the UCB policy and the DSEE policy developed for the classic risk-neutral MAB achieve these lower bounds.  \end{abstract}

\footnotetext{Copyright (c) 2016 IEEE. Personal use of this material is permitted. However, permission to use this material for any other purposes must be obtained from the IEEE by sending a request to pubs-permissions@ieee.org}

\footnotetext{This work was supported in part by the Army Research Office under Grant
W911NF-12-1-0271 and by the National Science Foundation under Grant 1549989. Part of this work was presented at IEEE International Conference on Acoustics, Speech and Signal Processing (ICASSP), April, 2015 and Allerton Conference on Communication, Control, and Computing, September, 2015. }

\section{Introduction}
\subsection{Risk-Neutral MAB}
Multi-armed bandit (MAB) is a class of online learning and sequential
decision-making problems under unknown models. An abstraction of this class of problems involves a slot machine with
$K$ independent arms and a single player. At each time, the player
chooses one arm to play and obtains a random reward drawn i.i.d.
over time from an unknown distribution specific to the chosen arm. The design objective is a
sequential arm selection policy that maximizes the total expected
reward over a horizon of length $T$ by striking a balance between earning immediate reward (exploitation) and learning the unknown reward models of all arms (exploration). The performance of an arm selection policy is measured by \emph{regret} defined as the expected cumulative reward loss over the entire time horizon against an omniscient player who knows the reward models and always plays the best arm~\cite{Lai&Robbins85AAM}. In their seminal work~\cite{Lai&Robbins85AAM}, Lai and Robbins showed
that the minimum regret achievable by any consistent policy is $\Omega(\log T)$. Several online learning policies exist in the literature that achieve the optimal regret order under various assumptions on the reward models (see \cite{Lai&Robbins85AAM,AgrawalEtal95AAP,Auer&etal02ML,DSEE,UCBRev}).

The above results were obtained under the so-called model-specific setting which focuses on the class of consistent (i.e., uniformly good) policies and characterizes their regret performance specific to the given set of reward distributions. The model-specific regret thus typically depend on certain statistics of the model such as the Kullback-Leiber (KL) divergence between the reward distributions and the gap in mean value from a suboptimal arm to the optimal arm.
Subsequent studies also considered the model-independent setting in which the performance of a learning policy is measured against the worst-case assignment of the reward distributions.
An $\Omega(\sqrt T)$ lower bound on the model-independent regret can be concluded from the lower bound results in~\cite{BR} and also from the lower bound results on the non-stochastic MAB problem studied in~\cite{NonSto}. A modification of the UCB policy was shown to achieve the optimal model-independent regret order~\cite{UCBRev}. A summary of the main results on risk-neutral MAB problems is given in the first column of Table~\ref{tabl1}. Readers are also referred to the comprehensive survey in~\cite{Survey}.

\subsection{Risk-Averse MAB}
\par The classic MAB formulation targets at maximizing the \emph{expected} return of an online learning policy. In many applications, especially in economics and finance, a player may be more interested in reducing the uncertainty (i.e., risk) in the outcome, rather than achieving the highest ensemble average. The focus of this paper is to develop results on risk-averse MAB, parallel to those on the classic risk-neutral MAB problems as summarized in the first column of Table~\ref{tabl1}.

The notions of risk and uncertainty have been widely studied, especially in economics and mathematical finance. A commonly adopted risk measure is \emph{mean-variance}~\cite{Marko}. Introduced by Markowits in 1952, mean-variance is particularly favored for portfolio selection in finance~\cite{Finance}. Specifically, the mean-variance $\xi(X)$ of a random variable $X$ is given by
\begin{eqnarray}\label{def1}
\xi(X)=\sigma^2(X)-\rho\ \mu(X),
\end{eqnarray}
where $\sigma^2(X)$ and $\mu(X)$ are, respectively, the variance and the mean of $X$, the coefficient $\rho>0$ is the risk tolerance factor that balances the two objectives of high return and low risk. The definition of mean-variance can be interpreted as the Lagrangian relaxation of the constrained optimization problem of minimizing the risk (measured by the variance) for a given expected return or maximizing the expected return for a given level of risk.

In~\cite{Sani}, a risk-averse MAB formulation based on the metric of mean-variance of observations was studied. Specifically, let $\pi(t)$ ($t=1,2,\ldots, T$), denote the arm played by a policy $\pi$ and $X_{\pi(t)}(t)$ the observed reward at time $t$. The cumulative mean-variance of the observed reward process is given by\footnote{Notice that the cumulative mean-variance of the observed reward process is considered in contrast to a normalized version divided by~$T$ as considered in~\cite{Sani}. This definition facilitates the comparison with the risk-neutral MAB results given in Table~\ref{tabl1}.}
\begin{eqnarray}\label{MVPi}
{\xi}_\pi(T)= \mathbb{E}\bigg[ \sum_{t=1}^{T} [ (X_{\pi(t)}(t) - \frac{1}{T}\sum_{t=1}^{T} X_{\pi(t)}(t))^2-\rho X_{\pi(t)}(t)]\bigg],
\end{eqnarray}
where the first term inside the expectation corresponds to the cumulative empirical variance and the second term the cumulative empirical mean. The objective is a learning policy that minimizes ${\xi}_\pi(T)$.
In this risk-averse model, the time variations in the observed reward process are considered as risk (see Sec.~\ref{motif} for motivating applications for this metric). Similar to risk-neutral MAB, regret is defined as the performance loss with respect to the optimal policy under a known model.

While conceptually similar, regret in terms of mean-variance of observations differs from that in total expected reward in several major aspects that complicate the analysis of the lower bounds and algorithm performance. First, under the measure of expected reward, the optimal policy under a known model is to play the arm with the highest mean value over the entire horizon. Under the measure of mean-variance, however, the optimal policy in the known model case is not necessarily a single-arm policy (as shown in Sec.~\ref{SecReg}) and is in general intractable. Second, under the measure of mean-variance, regret can no longer be written as the sum of certain properly defined immediate performance loss at each time instant.
More specifically, under the measure of mean-variance of observations, the contribution from playing a suboptimal arm at a given time $t$ to the overall
regret cannot be determined without knowing the entire sequence of decisions and observations. Third, regret in mean-variance involves higher order statistics of the random time spent on each arm. These fundamental differences in the behavior of regret are what render the
problem difficult and call for different techniques from that used in risk-neutral MAB problems.

The focus of \cite{Sani} was on developing learning policies. Specifically, two learning policies were developed and analyzed. The first one is a variation of the UCB policy (referred to as MV-UCB), and the second a variation of the DSEE policy (referred to as MV-DSEE) originally developed in~\cite{Auer&etal02ML} and~\cite{DSEE} for risk-neutral MAB.  It was shown\footnote{
In~\cite{Sani}, regret was defined comparing to the optimal single-arm policy that as we show in this paper is not necessarily the optimal policy under a known model. However, we show that the difference between regret with regard to the optimal single-arm policy and the one with regard to the optimal policy is sufficiently small that preserves the order of the results (See Sec.~\ref{SecReg}).
Also, in~\cite{Sani}, a weaker regret definition, referred to as the pseudo regret, was considered. It was shown that the pseudo regret of MV-UCB was $O(\log^2(T))$. However, since the gap between pseudo regret and the strict regret is in the order of $O(\sqrt T)$ (see Lemma~1 in~\cite{Sani}), the analysis in~\cite{Sani} only showed an $O(\sqrt T)$ regret order of MV-UCB.  We also point out that the two types of regret (model-specific vs. model-independent) were not distinguished in~\cite{Sani}. From their analysis, however, it is clear that the result on MV-UCB was in terms of model-specific regret while the result on MV-DSEE was in terms of model-independent regret.} that the model-specific regret growth rate of MV-UCB was $O(\sqrt{T})$ and the model-independent regret growth rate of MV-DSEE was $O(T^{2/3})$.

Major questions that remain open are whether the $\sqrt{T}$ model-specific regret order and the $T^{2/3}$ model-independent regret order are the best one can hope for and whether the significant gaps in regret growth rate between risk-neutral MAB and risk-averse MAB (from $\log T$ to $\sqrt{T}$ in model-specific regret and from $\sqrt{T}$ to $T^{2/3}$ in model-independent regret) are inherent to the risk measure of mean-variance of observations. As shown in this paper, the answer to these questions is negative in terms of model-specific regret and positive in terms of model-independent regret. Specifically, for model-specific regret, we establish an $\Omega(\log T)$ lower bound on the regret growth rate and provide a finer analysis of MV-UCB showing its $O(\log T)$ regret performance (in contrast to the $O(\sqrt{T})$ result given in~\cite{Sani}). In other words, the best achievable model-specific regret order remains to be logarithmic as in the risk-neutral MAB. In terms of model-independent regret, we show that the minimum regret growth rate is $\Omega(T^{2/3})$. Thus, the analysis of MV-DSEE given in~\cite{Sani} is tight.  We thus complete in this paper parallel results on risk-averse MAB under the measure of mean-variance of observations as summarized in the second column of Table~\ref{tabl1}.

\begin{table}[]
\begin{center}
\begin{tabular}{ ||c|c|c|c|| }
\hline\hline
& & &  \\
 &  & Risk-neutral MAB &  Risk-averse MAB \\
 & & & \\
\hline
\multirow{6}{5.7em}{Model-Specific~~}  & & &  \\
 & ~~Lower Bounds~~ & $\Omega(\log T)$ & {$\Omega(\log T)$}\\
 & & \cite{Lai&Robbins85AAM} &  \\
\cline{2-4}

 &  & & \\
 &Order-Optimal & \cite{Lai&Robbins85AAM,AgrawalEtal95AAP,Auer&etal02ML,DSEE} & {MV-UCB} \\
 & ~~~Policies~~~ & & \\
 & & &  \\

\hline

\multirow{6}{5.7em}{Model-Independent}  & & &  \\
 & Lower Bounds & $\Omega(\sqrt T)$ & {$\Omega( T^{2/3})$} \\
 & &\cite{NonSto,BR} &  \\

\cline{2-4}

 && & \\
&  Order-Optimal& \cite{UCBRev}& {~~~MV-DSEE~~~} \\
 &  Policies& & { \cite{Sani}}  \\
 & & &  \\
\hline\hline

\end{tabular}
\end{center}
\caption{Summary of results on risk-neutral and risk-averse MAB.\label{tabl1}}
\end{table}

\subsection{Motivating Applications}\label{motif}
Mean-variance is a well accepted risk measure whose quadratic scaling captures the natural inclination toward less risky options when the stakes are high. Studies have confirmed such risk-averse behaviors in investors (e.g. see \cite{Ku10}).

In the classic application of mean-variance to portfolio selection, the objective is a joint optimization of risk and return for a portfolio over a particular period of time. This guarantees a high expected return and a low variation in the outcome. A similar approach is also taken for intertemporal returns of assets. Specifically, the objective is to guarantee high average return and low variations over time~\cite{Volatility}. Such intertemporal variations are commonly referred to as \emph{volatility} in finance literature and measured by the sample variance of the return process.
The metric of mean-variance of the reward process studied in this paper as well as in~\cite{Sani} and~\cite{Even} precisely captures the objective of low volatility and high expected return. Another motivating application is clinical trial, where, besides obtaining high average return, it is desirable to avoid high variations in the treatment outcomes for different patients~\cite{Sani}.

Another formulation of risk-averse MAB is to consider the mean-variance of the total return at the end of the time horizon where the objective is to minimize the ensemble variations of the total return.
These two measures of mean-variance of the reward process and mean-variance of the total reward are suitable for different applications. For example, in the return of a financial security, the fluctuations over time are to be avoided as ``risk for financial security''~\cite{Intro}, while in a retirement investment, one might be more interested in the variation of the final return and less sensitive to the fluctuations in the intermediate returns. Some initial results on MAB under mean-variance of the total reward can be found in our preliminary study reported in~\cite{VZAllerton15}.

\subsection{Related Work}
\par There is a large body of work on risk-neutral MAB problems under different variations and for various applications, including clinical trials, internet advertising, web
search, and communication networks
(see~\cite{{KQ1},{MulPla},{Combinatorial},{LearningChanging},{HT1},{Monta},{Mingi},{Sali}}
and references therein).
MAB has also been applied to a variety of scenarios in finance and economics (see, for example, a comprehensive  survey in~\cite{econSurvey}).

There are relatively few studies on risk-averse MAB.
In an initial work on this topic, a sequential risk-averse problem using the measure of mean-variance of observations was formulated in~\cite{Even}. Different from this paper and \cite{Sani} that consider a stochastic formulation, \cite{Even} adopted the so-called non-stochastic full-information framework and established a negative result showing the infeasibility of sublinear regret.

There are a couple of results on risk-averse MAB under different risk measures. In \cite{GRAMAB},  the quality of an arm was measured by a general function of the mean and the variance of the random variable. This study, however, is closer to the risk-neutral MAB problems than to the problem studied in this paper. The reason is that under the model of \cite{GRAMAB}, regret remains to be the sum of the immediate performance loss at each time instant. As discussed earlier, regret in mean-variance of observations is no longer summable over time.

In~\cite{Safety,VZAllerton15}, MAB under the measure of \emph{value at risk}, which defines the minimum value of a random variable at a given confidence level, was studied. In~\cite{Safety},
learning policies using the measure of
conditional value at risk were developed. However, the performance guarantees were still within the risk-neutral MAB framework (in terms of the loss in the expected total reward) under the assumption that the best arm in terms of the mean value is also the best arm in terms of the conditional value at risk. In our recent work~\cite{VZAllerton15}, we considered risk-averse MAB under the measure of value at risk of the total reward and developed learning policies that offer poly-log regret performance. Another risk measure for MAB problems was considered in~\cite{Mill} in which the logarithm of moment generating function was used as a risk measure and high probability bounds on regret were obtained.

There are also a couple of studies, while not directly addressing risk-averse MAB, offering relevant results from different perspectives.
In \cite{VAR}, the sample complexity of both mean-variance and value at risk for single-period and multi-period decision making was studied. In~\cite{FuBa}, the problem of identifying the best arm in terms of different risk measures assuming
the existence of an efficient risk estimator was considered. Identifying the best arm is, however, different from an MAB formulation due to the absence of the tradeoff between exploration and exploitation which is at the heart of online learning problems.
Readers are also encouraged to read the work by Audibert~\etal~\cite{Audi} on the deviation of regret from its expected value.

\section{Problem Formulation and Preliminaries}\label{SecTwo}

Consider a $K$-armed bandit and a single player. At each time~$t$,
the player chooses one arm to play. Playing arm $i$ yields a
random reward $X_i(t)$ drawn i.i.d. from an unknown distribution $f_i$.
Let $\Fc=(f_1,\cdots,f_K)$ denote the set of the unknown
distributions. An arm selection policy $\pi$ specifies a function at each time $t$ that maps from the
player's observation and decision history to the arm to play at time $t$. Let $\{X_{\pi(t)}(t)\}_{t=1}^T$ denote the random reward sequence under policy $\pi$. The cumulative mean-variance $\xi_\pi(T)$ of the reward sequence is given in~\eqref{MVPi}.

The performance of policy $\pi$ is measured by regret $R_\pi(T)$ defined as the increase in cumulative mean-variance over a given horizon of length $T$ as compared to the optimal policy $\pi^*$ under a known model. (See Sec.~\ref{SecReg} for a detailed discussion on $\pi^*$)
\begin{eqnarray}\label{RPI1}
R_\pi(T) =  {\xi}_\pi(T)-  {\xi}_{\pi^*}(T).
\end{eqnarray}

\subsection{Notations}
Throughout the paper,~$*$ is used to indicate the arm that has the smallest mean-variance. If there are more than one arm with the smallest mean-variance value, one of them is chosen as~$*$.
Let $\Gamma_{i,j}=\mu_i-\mu_j$ and $\Delta_i=\xi_i-\xi_*$ denote, respectively, the difference between the mean values of arm $i$ and $j$, and the difference between the mean-variance of arm $i$ and the arm with the smallest mean-variance. Let $\Delta=\min_{i\neq*}\Delta_i$, $\Gamma=\max_{i}|\Gamma_{i,*}|$, $\sigma_{\max}=\max_{i}\sigma_i$ and $\mu_{\max}=\max_{i}\mu_i$.

The following notations are used for the sample mean, the sample variance, and the sample mean-variance of the random reward sequence from arm $i$ under a given policy $\pi$:
\begin{eqnarray}\nn
\overline{\mu}_i(t)&=&\frac{1}{\tau_i(t)}\sum_{s=1}^{\tau_i(t)}X_i(t_i(s)),\\\nn
\overline{\sigma^2}_i(t)&=&\frac{1}{\tau_i(t)}\sum_{s=1}^{\tau_i(t)}(X_i(t_i(s))-\overline{\mu}_i(t))^2,\\\nn
\overline{\xi}_i(t)&=&\overline{\sigma^2}_i(t)-\rho\overline{\mu}_i(t),
\end{eqnarray}
where $t_i(s)$ denotes the time instant corresponding to the $s$'th observation from arm $i$ and $\tau_i(t)$ denotes the number of times arm $i$ has been played up to time $t$. Note that these quantities depend on the policy $\pi$, which is omitted for simplicity. The time argument may also be omitted when it is clear from the context. The use of the biased estimator for the variance is for the simplicity of the expression. The results presented in this work remain the same with the
use of the unbiased estimator with $\tau_i(t)$ replaced by $\tau_i(t)-1$ in the expression of $\overline{\sigma^2}_i(t)$.

The KL-divergence between two distributions $f$ and $g$ is given by
\begin{eqnarray}
I(f,g)=\mathbb{E}_f[\log\frac{f(X)}{g(X)}],
\end{eqnarray}
where $\mathbb{E}_f$ denotes the expectation operator with respect to $f$.

In the proofs, the notation $\mathbb{E}[X, \mathcal{E}]$ for a random variable $X$ and an event $\mathcal{E}$ is equivalent to $\mathbb{E}[X\mathbb{I}_\mathcal{E}]$, where $\mathbb{I}$ is the indicator function.

\subsection{Concentration of the Sample Mean-Variance}
\par We assume that $(X_i-\mu_i)^2-\sigma_i^2$ ($i=1,\ldots,K$) for all arms have a sub-Gaussian distribution. Recall that a real-valued random variable $X$ is called sub-Gaussian if it
satisfies the following~\cite{SubG},
\begin{eqnarray}\label{zeta}
\mathbb{E}[e^{uX}]\le e^{\zeta_0 u^2/2}
\end{eqnarray}
for some constant $\zeta_0>0$.

We establish in Lemma 1 a concentration result on the sample mean-variance, which plays an important role in regret analysis.  This result is similar to the Chernoff-Hoeffding bound on the concentration of the sample mean for sub-Gaussian random variables~\cite{Chernoff} and locally sub-Gaussian random variables (i.e., light-tailed random variables with zero mean) that satisfy \eqref{zeta} only locally for $|u|\le u_0$ for some positive $u_0$~\cite{UCBEx}. The Chernoff-Hoeffding bound provides an upper bound on the probability of a given deviation of the sample mean from the true mean as follows. Let $\overline{\mu}_s$ be the sample mean of a random variable $X$ obtained from $s$ i.i.d. observations. Let $\mu=\mathbb{E}[X]$ and assume that $(X-\mu)$ has a sub-Gaussian distribution. We have, for all $a\in(0,\frac{1}{2\zeta_0}]$,
\begin{eqnarray}\label{CHBnd}
\left\{
\begin{array}{ll}
\mathbb{P}[\overline{\mu}_s-\mu<-\delta] &\le \exp(-{as\delta^2}),\\
\mathbb{P}[\overline{\mu}_s-\mu>\delta] &\le  \exp(-{as\delta^2}),\label{nine0}
\end{array}\right.
\end{eqnarray}
where $\zeta_0$ is given in~\eqref{zeta}. For locally sub-Gaussian random variables,~\eqref{CHBnd} holds locally for $\delta<\xi_0 u_0$~\cite{UCBEx}.
For $\delta\ge\xi_0 u_0$, we have the following concentration inequalities for locally sub-Gaussian random variables~\cite{UCBEx}:
\begin{eqnarray}\label{CHBnd2}
\left\{
\begin{array}{ll}
\mathbb{P}[\overline{\mu}_s-\mu<-\delta] &\le \exp(-{\frac{u_0s\delta}{2}}),\\
\mathbb{P}[\overline{\mu}_s-\mu>\delta] &\le  \exp(-{\frac{u_0s\delta}{2}}).\label{nine01}
\end{array}\right.
\end{eqnarray}
In the following lemma, we extend the Chernoff-Hoeffding bound to the sample mean-variance. Similar concentration inequalities for mean-variance were given in~\cite{Sani} and~\cite{VAR} for random variables with bounded support.
\begin{lemma}\label{LemmaChernoff}
Let $\overline{\xi}_s$ be the sample mean-variance of a random variable $X$ obtained from $s$ i.i.d. observations. Let $\mu=\mathbb{E}[X]$, $\sigma^2=\mathbb{E}[(X-\mu)^2]$, and assume that $(X-\mu)^2-
\sigma^2$ has a sub-Gaussian distribution, i.e.,
\begin{eqnarray}\nn
\mathbb{E}[e^{u((X-\mu)^2-\sigma^2)}]\le e^{\zeta_1 u^2/2}
\end{eqnarray}
for some constant $\zeta_1>0$.
As a result $X-\mu$ has a sub-Gaussian distribution, i.e.,
\begin{eqnarray}\nn
\mathbb{E}[e^{u(X-\mu)}]\le e^{\zeta_0 u^2/2}.
\end{eqnarray}
Let $\zeta=\max\{\zeta_0,\zeta_1\}$. We have, for all constants $a\in(0,\frac{1}{2\zeta}]$ and $\delta>0$,
\begin{eqnarray}\label{narnia1}
\mathbb{P}[\overline{\xi}_s-\xi(X)>\delta] &\le 2 \exp(-\frac{as\delta^2}{(1+\rho)^2}),
\end{eqnarray}
and for all $a\in(0,\frac{1}{2\zeta}]$ and $\delta\in(0,2+\rho]$,
\begin{eqnarray}\label{narnia2}
\mathbb{P}[\overline{\xi}_s-\xi(X)<-\delta] &\le 2 \exp(-\frac{as\delta^2}{(2+\rho)^2}).
\end{eqnarray}

\end{lemma}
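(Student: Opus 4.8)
The plan is to express $\overline{\xi}_s-\xi(X)$ in terms of two empirical averages of i.i.d.\ sub-Gaussian summands and then invoke the Chernoff--Hoeffding bound \eqref{CHBnd} on each. First I would record the bias identity $\overline{\sigma^2}_s=\frac{1}{s}\sum_{j=1}^{s}(X_j-\mu)^2-(\overline{\mu}_s-\mu)^2$, which gives
\begin{eqnarray}\nn
\overline{\xi}_s-\xi(X)=\frac{1}{s}\sum_{j=1}^{s}\big[(X_j-\mu)^2-\sigma^2\big]-(\overline{\mu}_s-\mu)^2-\rho(\overline{\mu}_s-\mu).
\end{eqnarray}
Writing $A=\frac{1}{s}\sum_{j=1}^{s}[(X_j-\mu)^2-\sigma^2]$ for the centered empirical second moment and $B=\overline{\mu}_s-\mu$ for the centered sample mean, this reads $\overline{\xi}_s-\xi(X)=A-B^2-\rho B$. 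By hypothesis $(X_j-\mu)^2-\sigma^2$ is sub-Gaussian with parameter $\zeta_1$, and since $\mathbb{P}[|X-\mu|>u]=\mathbb{P}[(X-\mu)^2>u^2]$ decays faster than Gaussian, $X_j-\mu$ is sub-Gaussian with some parameter $\zeta_0$; taking $\zeta=\max\{\zeta_0,\zeta_1\}$, a single $a\in(0,\tfrac{1}{2\zeta}]$ makes \eqref{CHBnd} applicable simultaneously to $A$ (the empirical mean of $(X_j-\mu)^2$, centered at $\sigma^2$) and to $B$.

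For the upper tail the quadratic term only helps. Since $-B^2\le0$, we have $\{\overline{\xi}_s-\xi(X)>\delta\}\subseteq\{A-\rho B>\delta\}$, and because $A-\rho B\le\delta$ whenever both $A\le\tfrac{\delta}{1+\rho}$ and $-B\le\tfrac{\delta}{1+\rho}$, the contrapositive yields $\{A-\rho B>\delta\}\subseteq\{A>\tfrac{\delta}{1+\rho}\}\cup\{-B>\tfrac{\delta}{1+\rho}\}$. Applying \eqref{CHBnd} to each event at deviation $\tfrac{\delta}{1+\rho}$ and summing produces the exponent $\tfrac{as\delta^2}{(1+\rho)^2}$ and establishes \eqref{narnia1}; note that no restriction on $\delta$ is needed here, precisely because the quadratic term was discarded in the favorable direction.

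The lower tail is where I expect the main obstacle. Here $\{\overline{\xi}_s-\xi(X)<-\delta\}=\{-A+B^2+\rho B>\delta\}$, and now $B^2$ enters with the unfavorable sign: a large fluctuation of $B$ of \emph{either} sign inflates $B^2$ and can trigger the event on its own, so the quadratic term can no longer be dropped and a crude union bound over $A$ and $B$ would give a probability linear in $\delta$ in the exponent, which fails to match the claimed $\delta^2$ rate. This is exactly where the restriction $\delta\le2+\rho$ is used. I would split $\{-A+B^2+\rho B>\delta\}\subseteq\{-A>\tfrac{\delta}{2+\rho}\}\cup\{|B|>\tfrac{\delta}{2+\rho}\}$, justified as follows: on the complement one has $|B|\le\tfrac{\delta}{2+\rho}\le1$, so $B^2\le|B|$ and hence $B^2+\rho B\le(1+\rho)|B|\le\tfrac{(1+\rho)\delta}{2+\rho}$, which together with $-A\le\tfrac{\delta}{2+\rho}$ forces $-A+B^2+\rho B\le\delta$. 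The elementary bound $|B|\le1\Rightarrow B^2\le|B|$ is what linearizes the quadratic fluctuation, and it is available \emph{only} under $\delta\le2+\rho$. Applying \eqref{CHBnd} to $-A$ and to the two tails of $B$, each at deviation $\tfrac{\delta}{2+\rho}$, produces the common exponent $\tfrac{as\delta^2}{(2+\rho)^2}$, and summing the resulting Chernoff bounds gives \eqref{narnia2}. The only delicate point throughout is this sign issue in the lower tail; once the linearization is in place, the two tail bounds follow mechanically from the sub-Gaussian concentration of $A$ and $B$.
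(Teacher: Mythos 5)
Your proposal follows essentially the same route as the paper's Appendix A: the same bias identity (the paper's expansion \eqref{43}), the same split of $\delta$ into $(1+\rho)$ parts for the upper tail after discarding the favorable $-B^2$ term, and the same use of $\delta/(2+\rho)\le 1$ to linearize the quadratic fluctuation for the lower tail. The one substantive difference is in the lower-tail union bound, and there your version is the more careful one: the paper bounds $\mathbb{P}[\overline{\xi}_s-\xi(X)<-\delta]$ by only the two events $\{\frac{1}{s}\sum_{t}((X(t)-\mu)^2-\sigma^2)<-\delta_1\}$ and $\{\overline{\mu}_s-\mu>\delta_1\}$, which does not cover the configuration $\overline{\mu}_s-\mu<-\delta_1$ — exactly the ``either sign inflates $B^2$'' case you identify as the crux (the paper's accompanying claim that $(\mu-\overline{\mu}_s)<\delta_1$ implies $(\mu-\overline{\mu}_s)^2<\delta_1$ is false without controlling the other tail). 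The price of doing it correctly, as you do, is a three-event union bound, so your argument yields a leading constant $3$ in front of $\exp(-as\delta^2/(2+\rho)^2)$ rather than the constant $2$ stated in \eqref{narnia2}; the paper's factor $2$ is an artifact of its incomplete union bound. You should make this explicit — either prove the lemma with $3$ in place of $2$, or note that the stated constant should be $3$ under this argument. Nothing downstream is affected, since every later use of the lemma (the MV-UCB analysis in Lemma~\ref{tauL}, Lemma~\ref{XS1}, and the lower-bound arguments) depends only on the exponential rate in $s$ and $\delta$, not on the leading constant.
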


\hspace{1em}

\noindent{\it Proof:} See Appendix~A.

In this paper, we focus on sub-Gaussian reward distributions. The main results hold for locally sub-Gaussian distributions with minor modifications similar to~\cite{UCBEx} and as commented in the paper.

\section{The Known Model Case}\label{SecReg}

In this section, we study the case where all arm distributions are known. This defines the benchmark performance in the regret definition given in~\eqref{RPI1}. We first show through a counter example that playing the arm $*$ that has the smallest mean-variance may not be optimal. This presents a major difficulty in regret analysis given that explicit characterizations of the optimal policy $\pi^*$ for the known model case are in general intractable. Our approach is to bound the performance gap between $\pi^*$ and the optimal single-arm policy $\widehat{\pi}^*$ (i.e., playing arm $*$ all through), which allows us to analyze the order of the regret defined with respect to $\pi^*$ by analyzing $\widehat{\pi}^*$.

To see that $\widehat{\pi}^*$ may not be optimal, the key is to notice that the variance term (i.e., the first term on the right-hand side of~\eqref{MVPi}) in the cumulative mean-variance is with respect to the sample mean calculated from rewards obtained from all arms. When the remaining time horizon is short and the current sample mean is sufficiently close to the mean value of a suboptimal arm $j\neq *$, it may be more rewarding (in terms of minimizing the mean-variance) to play arm $j$ rather than arm $*$. Consider a concrete example with two Gaussian-distributed arms with parameters
$\mu_1=0$, $\mu_2=1$, $\sigma_1^2=1$, $\sigma_2^2=2.1$. Let $\rho=1$ and $T=2$. It is easy to see that $\xi_1=1$ and $\xi_2=1.1$, and the optimal single-arm policy $\widehat{\pi}^*$ is to always play arm 1, yielding a cumulative mean-variance of $\xi_{\widehat{\pi}^*}(t)=1$. Consider a policy $\pi$ with $\pi(1)=1$ and $\pi(2)=\mathbb{I}_{X_1(1)<0.5}+2\mathbb{I}_{X_1(1)\ge0.5}$. It can be shown that $\xi_\pi(T)<0.7$, demonstrating the sub-optimality of $\widehat{\pi}^*$.

The above example also gives a glimpse of the complexity in finding $\pi^*$ for a general problem. To circumvent this difficulty, our approach is to show that $\widehat{\pi}^*$ is a good proxy of $\pi^*$ with a performance loss upper bounded by a constant for large $T$. We can then obtain regret bounds through $\widehat{\pi}^*$.

Recall that regret $R_\pi(T)$ in~\eqref{RPI1} is defined with respect to $\pi^*$. Using $\widehat{\pi}^*$ as the benchmark, we define a proxy regret $\widehat{R}_{\pi}(T)$ as
\begin{eqnarray}\label{RPI2}
\widehat{R}_\pi(T) =  {\xi}_\pi(T)-  {\xi}_{\widehat{\pi}^*}(T).
\end{eqnarray}
Our objective is to bound the difference between $R_\pi(T)$ and $\widehat{R}_{\pi}(T)$. To do this, we first
derive in Lemma~\ref{RegEx} a closed-form expression of $\widehat{R}_\pi(T)$ as a function of the number of times $\{\tau_i\}_{i=1}^K$ each arm is played over the entire horizon of length $T$. This lemma is the cornerstone of the regret analysis in subsequent sections. The proof of Lemma~\ref{RegEx} employs some techniques used in expanding the variance term in Appendix~A of~\cite{Sani}. However, we point out that~\cite{Sani} did not provide an exact expression of $\widehat{R}_\pi(T)$; rather, the results were obtained using an approximate of $\widehat{R}_\pi(T)$ (referred to as pseudo-regret in~\cite{Sani}).

\begin{lemma}\label{RegEx}
The regret of a policy $\pi$ with respect to the optimal single-arm policy $\widehat{\pi}^*$ under the measure of mean-variance of observations can be written as
\begin{eqnarray}
\widehat{R}_\pi(T) &=& \sum_{i=1}^K\mathbb{E}[\tau_i(T)]\Delta_i+\sum_{i=1}^K\mathbb{E}[\tau_i(T)]\Gamma_{i,*}^2-\frac{1}{T}\mathbb{E}[(\sum_{i=1}^K\tau_i(T)(\overline{\mu}_i(T)-\mu_*))^2]+\sigma_*^2.\label{RegExp}
\end{eqnarray}
\end{lemma}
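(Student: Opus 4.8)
The plan is to evaluate the two cumulative mean-variances in \eqref{RPI2} separately. The benchmark is immediate: under $\widehat{\pi}^*$ the rewards $X_*(1),\ldots,X_*(T)$ are i.i.d.\ with mean $\mu_*$ and variance $\sigma_*^2$, so their expected sum of squared deviations from the sample mean is $(T-1)\sigma_*^2$ and their expected cumulative value is $T\mu_*$; hence $\xi_{\widehat{\pi}^*}(T)=(T-1)\sigma_*^2-\rho T\mu_*=T\xi_*-\sigma_*^2$. This already accounts for the additive constant: since $\widehat{R}_\pi(T)=\xi_\pi(T)-\xi_{\widehat{\pi}^*}(T)$, the $-\sigma_*^2$ here reappears as the $+\sigma_*^2$ in \eqref{RegExp}, and the term $-T\xi_*$ will combine with $\xi_\pi(T)$ to form the $\Delta_i$ sum.

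The substance is the expansion of $\xi_\pi(T)$. First I would group the $T$ rewards according to the arm that generated them and invoke the pathwise analysis-of-variance identity, valid for any realization of the (random) counts and observations,
\begin{eqnarray}\nn
\sum_{t=1}^T\big(X_{\pi(t)}(t)-\overline{X}\big)^2 = \sum_{i=1}^K\tau_i(T)\,\overline{\sigma^2}_i(T)+\sum_{i=1}^K\tau_i(T)\big(\overline{\mu}_i(T)-\overline{X}\big)^2,
\end{eqnarray}
where $\overline{X}=\frac1T\sum_{t=1}^T X_{\pi(t)}(t)=\frac1T\sum_i\tau_i(T)\overline{\mu}_i(T)$ is the grand sample mean and empty arms contribute zero. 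I would then re-center the between-arm term at $\mu_*$: using $\overline{\mu}_i-\overline{X}=(\overline{\mu}_i-\mu_*)-\frac1T\sum_j\tau_j(\overline{\mu}_j-\mu_*)$ together with $\sum_i\tau_i=T$, the between-arm sum collapses to $\sum_i\tau_i(\overline{\mu}_i-\mu_*)^2-\frac1T\big(\sum_i\tau_i(\overline{\mu}_i-\mu_*)\big)^2$, which is the source of the quadratic term in \eqref{RegExp}. Writing $\tau_i\overline{\sigma^2}_i=\sum_{s=1}^{\tau_i}(X_i(t_i(s))-\mu_i)^2-\tau_i(\overline{\mu}_i-\mu_i)^2$ and expanding $(\overline{\mu}_i-\mu_*)^2=(\overline{\mu}_i-\mu_i)^2+2\Gamma_{i,*}(\overline{\mu}_i-\mu_i)+\Gamma_{i,*}^2$, the $\tau_i(\overline{\mu}_i-\mu_i)^2$ pieces cancel between the within- and between-arm parts, leaving terms in $\sum_s(X_i(t_i(s))-\mu_i)^2$, in $\Gamma_{i,*}\sum_s(X_i(t_i(s))-\mu_i)$, in $\Gamma_{i,*}^2\tau_i$, and the quadratic term.

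The crux, and the step I expect to be the main obstacle, is to take expectations of the two random sums $\sum_{s=1}^{\tau_i(T)}(X_i(t_i(s))-\mu_i)$ and $\sum_{s=1}^{\tau_i(T)}(X_i(t_i(s))-\mu_i)^2$, since $\tau_i(T)$ is correlated with the arm-$i$ observations and Wald's identity does not apply verbatim. The key observation is that the $s$-th reward $X_i(t_i(s))$ is revealed only after the decision to play arm $i$ for the $s$-th time, so the event $\{\tau_i(T)\ge s\}$ is determined by the first $s-1$ rewards of arm $i$ and by the (independent) rewards of the other arms, and is hence independent of $X_i(t_i(s))$. Writing each sum as $\sum_{s\ge1}(\cdot)\,\mathbb{I}_{\{\tau_i(T)\ge s\}}$, which is finite because $\tau_i(T)\le T$, and using this independence with $\mathbb{E}[X_i(t_i(s))-\mu_i]=0$ and $\mathbb{E}[(X_i(t_i(s))-\mu_i)^2]=\sigma_i^2$ yields
\begin{eqnarray}\nn
\mathbb{E}\Big[\sum_{s=1}^{\tau_i(T)}\!(X_i(t_i(s))-\mu_i)\Big]=0,\qquad \mathbb{E}\Big[\sum_{s=1}^{\tau_i(T)}\!(X_i(t_i(s))-\mu_i)^2\Big]=\sigma_i^2\,\mathbb{E}[\tau_i(T)].
\end{eqnarray}
The first identity also gives $\mathbb{E}[\sum_t X_{\pi(t)}(t)]=\sum_i\mu_i\mathbb{E}[\tau_i(T)]$ for the $-\rho$ mean term, so the linear contributions vanish and, combining the $\sigma_i^2\mathbb{E}[\tau_i]$ from the second identity with the $-\rho\mu_i\mathbb{E}[\tau_i]$ from the mean term into $\xi_i\mathbb{E}[\tau_i]$, one obtains $\xi_\pi(T)=\sum_i\xi_i\,\mathbb{E}[\tau_i(T)]+\sum_i\Gamma_{i,*}^2\mathbb{E}[\tau_i(T)]-\frac1T\mathbb{E}[(\sum_i\tau_i(\overline{\mu}_i-\mu_*))^2]$. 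Subtracting $\xi_{\widehat{\pi}^*}(T)=T\xi_*-\sigma_*^2$ and using $\sum_i\mathbb{E}[\tau_i(T)]=T$ to rewrite $\sum_i\xi_i\mathbb{E}[\tau_i]-T\xi_*=\sum_i\Delta_i\mathbb{E}[\tau_i]$ produces exactly \eqref{RegExp}.
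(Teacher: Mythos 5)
Your proposal is correct, and it reaches \eqref{RegExp} by a genuinely different and somewhat cleaner route than the paper's. The paper (Appendix~B) expands the cumulative variance by centering each observation at its arm mean, writing $X_i(t_i(s))-\mu_i+\mu_i-\overline{\mu}_\pi$, and then recenters the between-arm part at the deterministic quantity $\mu_\pi=\frac{1}{T}\sum_j\mathbb{E}[\tau_j]\mu_j$ (the expectation of the grand sample mean $\overline{\mu}_\pi$); this forces it to track several cross terms in expectation and to carry out two further expansions (one converting $\sum_i\mathbb{E}[\tau_i(\mu_i-\mu_\pi)^2]$ into $\sum_{i\neq*}\mathbb{E}[\tau_i]\Gamma_{i,*}^2$ minus a quadratic correction, another expanding $\mathbb{E}[T(\mu_\pi-\overline{\mu}_\pi)^2]$), each of which again rests on Wald-type identities. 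You instead keep the algebra pathwise: the within/between ANOVA identity, the recentering of the between-arm term at $\mu_*$, and the cancellation of the $\tau_i(\overline{\mu}_i-\mu_i)^2$ pieces hold exactly for every realization, and expectations enter only at the last step through two Wald-type identities. What your route buys is economy and transparency: the intermediate quantity $\mu_\pi$ never appears, and you justify the Wald step explicitly by noting that $\{\tau_i(T)\ge s\}$ is measurable with respect to the first $s-1$ arm-$i$ rewards together with the other arms' (independent) rewards, hence independent of $X_i(t_i(s))$ --- which is precisely the delicate point, since $\tau_i(T)$ is correlated with arm $i$'s observations; the paper simply cites ``Wald identity'' for the analogous steps (e.g., for $\mathbb{E}[\sum_i\sum_{s=1}^{\tau_i}(X_i(t_i(s))-\mu_i)^2]=\sum_i\mathbb{E}[\tau_i]\sigma_i^2$ and for the vanishing of $\mathbb{E}[\tau_i(\overline{\mu}_i-\mu_i)(\mu_i-\mu_\pi)]$). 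Your benchmark computation $\xi_{\widehat{\pi}^*}(T)=(T-1)\sigma_*^2-\rho T\mu_*=T\xi_*-\sigma_*^2$ and the final bookkeeping via $\sum_i\mathbb{E}[\tau_i(T)]=T$ agree with the paper's; the only (trivial) addition one might make is that any internal randomization of the policy should also be included in the conditioning $\sigma$-algebra in your independence argument.
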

\noindent{\it Proof:} See Appendix B.

Recall that the regret in terms of the total expected reward can be written as a weighted sum of the expected value of $\tau_i(T)$. Specifically, based on Wald identity, the regret is given by
\[
\sum_{i=1}^K \mathbb{E}[\tau_i(T)] (\mu_{\max}-\mu_i).
\]
The regret in terms of mean-variance of observations is, however, a much more complex function of $\tau_i(T)$ as given in Lemma~\ref{RegEx}. It depends on not only the expected value of $\tau_i(T)$, but also the second moment of $\tau_i(T)$ and the cross correlation between $\tau_i(T)$ and $\tau_j(T)$.

Based on Lemma~\ref{RegEx}, we show in Theorem~\ref{TheKtarin} that for $\Delta > 0$ and $T$ sufficiently large, the difference between $R_\pi(T)$ and
$\widehat{R}_{\pi}(T)$ is bounded by a constant independent of $T$.

\begin{theorem}\label{TheKtarin}
For any policy $\pi$, we have
\begin{eqnarray}\label{Alandige}
0\le R_\pi(T)-\widehat{R}_\pi(T)\le\min\{\sigma_{\max}^2(\sum_{i\neq*}\frac{\Gamma_{i,*}^2}{\Delta_i}+1),\frac{K}{a}\log T\}.
\end{eqnarray}

\end{theorem}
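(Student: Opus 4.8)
The plan is to first observe that the quantity to be bounded is in fact independent of $\pi$: since $R_\pi(T)=\xi_\pi(T)-\xi_{\pi^*}(T)$ and $\widehat R_\pi(T)=\xi_\pi(T)-\xi_{\widehat\pi^*}(T)$, we have $R_\pi(T)-\widehat R_\pi(T)=\xi_{\widehat\pi^*}(T)-\xi_{\pi^*}(T)$. The left inequality is then immediate: $\pi^*$ is optimal under the known model, so $\xi_{\pi^*}(T)\le\xi_{\widehat\pi^*}(T)$ and the difference is nonnegative. For the right inequality, note that $\xi_{\widehat\pi^*}(T)-\xi_{\pi^*}(T)=-\widehat R_{\pi^*}(T)=-\min_\pi\widehat R_\pi(T)$, so it suffices to establish a lower bound on $\widehat R_\pi(T)$ that holds \emph{uniformly} over all policies $\pi$.

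To do this I would start from the closed form of Lemma~\ref{RegEx} and split the quadratic term. Writing $S=\sum_{i=1}^K\tau_i(T)(\overline\mu_i(T)-\mu_*)=A+B$, where $A=\sum_{t=1}^T(X_{\pi(t)}(t)-\mu_{\pi(t)})$ is a sum of centered rewards and $B=\sum_{i\neq*}\tau_i(T)\Gamma_{i,*}$ is the predictable mean-gap term, two clean cancellations appear. First, $A$ is a sum of martingale differences with conditional second moment $\sigma_{\pi(t)}^2$, so orthogonality gives $\mathbb E[A^2]=\sum_{t}\mathbb E[\sigma_{\pi(t)}^2]=\sum_i\sigma_i^2\mathbb E[\tau_i(T)]$, whence $\sigma_*^2-\tfrac1T\mathbb E[A^2]=\tfrac1T\sum_{i\neq*}(\sigma_*^2-\sigma_i^2)\mathbb E[\tau_i(T)]\ge-\sigma_{\max}^2$. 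Second, Cauchy--Schwarz together with $\sum_i\tau_i(T)\le T$ yields $\tfrac1T\mathbb E[B^2]\le\sum_{i\neq*}\Gamma_{i,*}^2\mathbb E[\tau_i(T)]$, which exactly absorbs the mean-gap term in~\eqref{RegExp}. After these cancellations one is left with $\widehat R_\pi(T)\ge\sum_{i\neq*}\mathbb E[\tau_i(T)]\Delta_i-\tfrac2T\mathbb E[AB]-\sigma_{\max}^2$, so everything reduces to controlling the cross term $\mathbb E[AB]$, i.e.\ the covariance between the aggregate reward fluctuation and the number of suboptimal plays.

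For the first upper bound I would control the cross term crudely by Cauchy--Schwarz, $\tfrac2T|\mathbb E[AB]|\le 2\sigma_{\max}\sqrt{\sum_{i\neq*}\Gamma_{i,*}^2\mathbb E[\tau_i(T)]}$, using $\mathbb E[A^2]\le\sigma_{\max}^2 T$ and the bound on $\mathbb E[B^2]$ above. Setting $x_i=\mathbb E[\tau_i(T)]$, the remaining expression $\sum_{i\neq*}x_i\Delta_i-2\sigma_{\max}\sqrt{\sum_{i\neq*}\Gamma_{i,*}^2 x_i}$ is bounded below by $-\sigma_{\max}^2\sum_{i\neq*}\Gamma_{i,*}^2/\Delta_i$: by AM--GM, $\sum_{i}x_i\Delta_i+\sigma_{\max}^2\sum_{i}\Gamma_{i,*}^2/\Delta_i\ge 2\sigma_{\max}\sqrt{(\sum_i x_i\Delta_i)(\sum_i\Gamma_{i,*}^2/\Delta_i)}$, and by Cauchy--Schwarz $(\sum_i x_i\Delta_i)(\sum_i\Gamma_{i,*}^2/\Delta_i)\ge(\sum_i|\Gamma_{i,*}|\sqrt{x_i})^2\ge\sum_i\Gamma_{i,*}^2 x_i$. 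Adding the $-\sigma_{\max}^2$ slack yields $\widehat R_\pi(T)\ge-\sigma_{\max}^2(\sum_{i\neq*}\Gamma_{i,*}^2/\Delta_i+1)$, which is the first term in the minimum.

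The hard part is the second bound $\tfrac{K}{a}\log T$, which must dominate in the regime of vanishing gaps where the first bound blows up. Here the crude Cauchy--Schwarz estimate is too weak: it only yields an $O(\sqrt T)$ control of the cross term, and the exact cancellation between $\mathbb E[B^2]$ and the mean-gap term must be retained rather than discarded. The plan is to instead control $\mathbb E[AB]=\sum_{i\neq*}\Gamma_{i,*}\mathbb E[A\,\tau_i(T)]$ through the concentration inequality of Lemma~\ref{LemmaChernoff}: the extent to which any policy can align a large reward fluctuation with a large suboptimal count is limited by the sub-Gaussian deviation probabilities, whose cumulative effect over the horizon contributes only $O(\log T)$ per arm. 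Establishing this refined, concentration-based bound on the cross term---and thereby showing $\widehat R_\pi(T)\ge-\tfrac{K}{a}\log T$ uniformly in $\pi$---is the step I expect to require the most care and is the main technical obstacle.
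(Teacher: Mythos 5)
Your nonnegativity argument and your proof of the first bound in the minimum are correct and essentially identical to the paper's Appendix~D: the same reduction $R_\pi(T)-\widehat R_\pi(T)=-\widehat R_{\pi^*}(T)$, the same use of Wald's second identity for $\mathbb{E}[A^2]$, the same Cauchy--Schwarz bounds on $\mathbb{E}[B^2]$ and on the cross term, and an optimization step equivalent to the paper's per-arm minimization. The genuine gap is the second bound $\frac{K}{a}\log T$, which you leave as a plan, and the plan as stated cannot succeed: the quantity you propose to bound, $\frac{2}{T}|\mathbb{E}[AB]|$, is \emph{not} $O(K\log T)$ uniformly over policies. Consider $K=2$, arm $1=*$, $\Gamma_{2,*}=\Gamma>0$, and the policy that plays arm $1$ up to time $T/2$, then plays arm $2$ for the entire remaining half if and only if $A_{1/2}:=\sum_{t\le T/2}(X_1(t)-\mu_1)>0$. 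Since $\tau_2(T)$ is determined at time $T/2$ and the second-half increments of $A$ are conditionally centered, $\mathbb{E}[A\tau_2]=\frac{T}{2}\,\mathbb{E}[A_{1/2},A_{1/2}>0]=\Theta(T^{3/2})$, so $\frac{2}{T}\mathbb{E}[AB]=\Theta(\Gamma\sqrt{T})$. Concentration cannot repair this: Lemma~\ref{LemmaChernoff} controls deviations of sample means at scale $1/\sqrt{s}$, but $A$ aggregates terms $\tau_i(\overline{\mu}_i-\mu_i)\sim\sqrt{\tau_i}$, so the alignment a policy can create between $A$ and $\tau_i$ is limited only at the $\sqrt{T}$ scale, not the $\log T$ scale. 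In this example the theorem survives only because the nonnegative term $\sum_i\Gamma_{i,*}^2\mathbb{E}[\tau_i]-\frac{1}{T}\mathbb{E}[B^2]$ is of order $\Gamma^2 T$ and absorbs the cross term; any correct proof must couple the cross term with this quadratic compensation rather than bound it in isolation, which is precisely what your plan forgoes.

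The paper's proof restructures the quadratic so that no aggregate cross term ever appears. It applies a weighted Cauchy--Schwarz inequality pointwise, $\bigl(\sum_i\tau_i(\overline{\mu}_i-\mu_*)\bigr)^2\le T\sum_i\tau_i(\overline{\mu}_i-\mu_*)^2$ (inequality~\eqref{tue3}), reducing everything to per-arm terms. Expanding $(\overline{\mu}_i-\mu_*)^2=(\overline{\mu}_i-\mu_i)^2+2\Gamma_{i,*}(\overline{\mu}_i-\mu_i)+\Gamma_{i,*}^2$, the per-arm cross term vanishes exactly because $\mathbb{E}[\tau_i(\overline{\mu}_i-\mu_i)]=\mathbb{E}[\sum_{s=1}^{\tau_i}(X_i(t_i(s))-\mu_i)]=0$ by Wald's identity, the $\Gamma_{i,*}^2$ terms cancel exactly against the mean-gap term of~\eqref{RegExp}, and the only surviving term, $\sum_i\mathbb{E}[\tau_i(\overline{\mu}_i-\mu_i)^2]$, is bounded by $\frac{K}{a}(\log T+2)$ through the maximal/union-bound argument of Lemma~\ref{XS1}, which integrates the tail of $\mathbb{P}[\max_{s\le T}s(\overline{\mu}_s-\mu)^2>x\log T]$. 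That lemma---a logarithmic bound on the quadratic deviation weighted by a stopping time, not on a cross-covariance---is where the $O(\log T)$ per arm genuinely comes from, and it is the ingredient your proposal is missing.
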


\noindent{\it Proof:} Since the performance of the optimal policy cannot be worse than the optimal single-arm policy, we can immediately see that $\widehat{R}_\pi(T)\le R_\pi(T)$. For the upper bound, we write $R_\pi(T)-\widehat{R}_\pi(T)=-\widehat{R}_{\pi^*}(T)$ and use the regret expression given in Lemma~\ref{RegEx} to establish lower bounds on $\widehat{R}_{\pi^*}(T)$. We first show that for $\Delta>0$ and large $T$, $\widehat{R}_{\pi^*}(T)$ is lower bounded by a constant. For the cases with small $\Delta$, we show that, based on Lemma~\ref{XS1} (proved in Appendix C), the difference between the second and the third terms on the RHS of~\eqref{RegExp} is bounded by an order of $\log T$ term. For a detailed proof, see Appendix D.

\begin{lemma}\label{XS1}
Let $\{X(t)\}_{t=1}^T$ be an i.i.d. random process with mean $\mu=\mathbb{E}[X(t)]$ that satisfies Lemma~\ref{LemmaChernoff} with constant $a$. Let $\tau\le T$ be an stopping time for this random process and let $\overline{\mu}$ denote the sample mean from $\tau$ samples: $\overline{\mu}=\frac{\sum_{s=1}^\tau X(s)}{\tau}$. We have the following inequality
\begin{eqnarray}\label{Lemma32}
\mathbb{E}[\tau(\overline{\mu}-\mu)^2]&\le& \frac{1}{a}(\log T+2).
\end{eqnarray}

\end{lemma}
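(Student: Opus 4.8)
The plan is to control $\mathbb{E}[\tau(\overline{\mu}-\mu)^2]$ through its tail integral together with a union bound over the possible values of $\tau$, so that the fixed-sample-size concentration inequality of Lemma~\ref{LemmaChernoff} can be applied term by term. Write $Z_s=s(\overline{\mu}_s-\mu)^2$, where $\overline{\mu}_s=\frac{1}{s}\sum_{k=1}^s X(k)$ is the sample mean of the first $s$ observations, so that on the event $\{\tau=s\}$ the quantity of interest equals $Z_s$ and thus $\tau(\overline{\mu}-\mu)^2=Z_\tau$. Since $\tau$ takes values in $\{1,\dots,T\}$, I would first bound, for every $y>0$,
\begin{eqnarray}\nn
\mathbb{P}[Z_\tau > y] \le \sum_{s=1}^T \mathbb{P}[Z_s > y,\ \tau=s] \le \sum_{s=1}^T \mathbb{P}[Z_s > y].
\end{eqnarray}
The virtue of this step is that it dispenses entirely with the correlation between $\tau$ and the observations: only the inequality $\mathbb{I}_{\tau=s}\le 1$ is invoked, so no optional-stopping or martingale argument is needed, and in fact the stopping-time hypothesis is used only through $\tau\le T$.

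The key observation is that the prefactor $s$ in $Z_s$ exactly cancels the sample-size dependence of the Chernoff exponent. Indeed, $\{Z_s>y\}$ is the event $\{|\overline{\mu}_s-\mu|>\sqrt{y/s}\}$, and applying the two one-sided Chernoff--Hoeffding bounds~\eqref{CHBnd} (valid with constant $a$ by hypothesis) with $\delta=\sqrt{y/s}$ gives $\mathbb{P}[Z_s>y]\le 2\exp(-as\cdot(y/s))=2\exp(-ay)$, a bound \emph{independent of} $s$. Summing over $s\le T$ then yields $\mathbb{P}[Z_\tau>y]\le 2T\exp(-ay)$, while trivially $\mathbb{P}[Z_\tau>y]\le 1$.

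It remains to assemble these two estimates in $\mathbb{E}[Z_\tau]=\int_0^\infty \mathbb{P}[Z_\tau>y]\,dy$. I would use the trivial bound on $[0,y_0]$ and the exponential bound on $[y_0,\infty)$, choosing the crossover $y_0=\frac{1}{a}\log(2T)$ so that $2T\exp(-ay_0)=1$. This gives $\mathbb{E}[Z_\tau]\le y_0+2T\int_{y_0}^\infty e^{-ay}\,dy=\frac{1}{a}\log(2T)+\frac{1}{a}=\frac{1}{a}(\log T+\log 2+1)$, and since $\log 2+1<2$ this is at most $\frac{1}{a}(\log T+2)$, as claimed.

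I expect the only genuinely delicate point to be the treatment of the random sample size $\tau$: the natural worry is that $(\overline{\mu}_\tau-\mu)^2$ cannot be controlled by the fixed-$s$ concentration bound because $\tau$ is correlated with the data. The union bound resolves this cleanly at the cost of a factor $T$ inside the probability, and it is precisely this factor that produces the $\log T$ term after optimizing $y_0$; the remaining steps are routine. (For locally sub-Gaussian rewards one would additionally invoke~\eqref{CHBnd2} for the large-deviation regime $\delta\ge\xi_0 u_0$, which affects only the constants.)
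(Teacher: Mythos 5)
Your proposal is correct and follows essentially the same route as the paper's proof in Appendix~C: a union bound over the possible sample sizes $s\le T$, the Chernoff--Hoeffding bound~\eqref{CHBnd} exploiting the exact cancellation of $s$ in the exponent, and integration of the resulting tail bound split at a logarithmic crossover. The only differences are cosmetic (the paper parametrizes the tail as $x\log T$ and splits the integral at $x=1/a$, whereas you integrate in $y$ with crossover $y_0=\frac{1}{a}\log(2T)$), and both yield the stated bound $\frac{1}{a}(\log T+2)$.
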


\section{Model-Specific Regret}\label{SecFour}

In this section, we consider the model-specific setting. We establish lower bounds on model-specific regret feasible among all consistent policies and the order optimality of MV-UCB and MV-DSEE.

\subsection{Lower Bounds on Model-Specific Regret}

To avoid trivial lower bounds on regret caused by policies that heavily bias toward certain distribution models (e.g., a policy that always plays arm 1), the model-specific setting focuses on the so-called consistent policies.
The model-specific lower bounds for risk-neutral MAB (Theorems 1 and 2 in~\cite{Lai&Robbins85AAM}) are given for the set of policies that play suboptimal arms only $o(T^\alpha)$ times for all $\alpha\in (0,1)$.
We relax this assumption and focus on $\alpha-$consistent policies defined as follows.

\emph{Definition.} A policy $\pi$ is $\alpha$-consistent ($0<\alpha<1$) if for all reward distributions and for all $j\neq*$,
\begin{eqnarray}\label{alphacd}
\mathbb{E}[\tau_j(T)]\le T^\alpha.
\end{eqnarray}

We establish a lower bound on the model-specific regret feasible among the class of $\alpha$-consistent policies for all $\alpha\in(0,1)$. Similar to the results by Lai and Robbins in~\cite{Lai&Robbins85AAM} for risk-neutral MAB, we consider the family of one-parameter distribution models. Specifically, we assume that the distribution of arm $i$ is given by $f(.;\theta_i)$ and the distribution model $\mathcal{F}=(f(.;\theta_1),...,f(.;\theta_K))$ can be represented by ${\Theta}=(\theta_1,...,\theta_K)$. The parameters $\theta_i$ are taking value from a set $\Uc$ satisfying the following regularity condition (similar to that in~\cite{Lai&Robbins85AAM}).

\emph{Assumption 1.}
For any $\theta$, $\lambda$, and $\lambda'\in \mathcal{U}$, and for any $\epsilon>0$, there exists a $\delta>0$ such that $0<\xi(\lambda')-\xi(\lambda)<\delta$ implies $|I(f(.;\theta),f(.;\lambda))-I(f(.;\theta),f(.;\lambda'))|< \epsilon$.

The lower bound in~\cite{Lai&Robbins85AAM} is asymptotic ($T\rightarrow\infty$). In addition to establishing the corresponding asymptotic lower bound for risk-averse MAB, we also provide in Theorem~\ref{MSL} a
finite-time lower bound when the following assumption holds.

\emph{Assumption 2.}
For all $\theta$ and $\lambda\in \mathcal{U}$, let $X$ be a sub-Gaussian random variable with distribution $f(.;\theta)$. The random variable $Y=f(X;\lambda)$ is sub-Gaussian\footnote{Note that $Y$ is a function of $X$: for each $X=x$ generated according to $f(x;\theta)$, we have $Y=f(x;\lambda)$.}.

\begin{theorem}\label{MSL}
Consider the MAB problem under the measure of mean-variance of observations. Let $\pi$ be an $\alpha-$consistent policy and $\Theta\subset \mathcal{U}$ be the distribution model. Under Assumption 1, the model-specific regret satisfies, for any constant $c_1<1-\alpha$,
\begin{eqnarray}
\lim\inf_{T\rightarrow\infty}\frac{R_\pi(T)}{\log T} &\ge& \sum_{\substack{i=1\\i\neq*}}^{K} \frac{c_1}{I(f_i,f_*)}(\Delta_i+\Gamma_{i,*}^2).
\end{eqnarray}
Furthermore, under Assumption~2, for $T_1\in \mathbb{N}$,
\begin{eqnarray}
R_\pi(T) &\ge& \sum_{\substack{i=1\\i\neq*}}^{K} \frac{c_1c_2\log T }{I(f_i,f_*)}(\Delta_i+\Gamma_{i,*}^2-\epsilon_{T_1}),~~~~~\textit{for all}~~ T>T_1,
\end{eqnarray}
where $\epsilon_{T_1}$ can be arbitrary small when $T_1$ is large enough and $0<c_2<1$ is independent of $T$ and $\mathcal{F}$.
\end{theorem}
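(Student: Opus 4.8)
\noindent\emph{Proof proposal.}
The plan is to lower-bound $R_\pi(T)$ by first passing to the single-arm benchmark. Since $\pi^*$ is optimal we have $R_\pi(T)\ge\widehat{R}_\pi(T)$ (this is exactly the left inequality $0\le R_\pi(T)-\widehat{R}_\pi(T)$ established in Theorem~\ref{TheKtarin}), so it suffices to lower-bound the exact expression for $\widehat{R}_\pi(T)$ given in Lemma~\ref{RegEx}. The dominant contribution there is $\sum_{i\neq*}\mathbb{E}[\tau_i(T)](\Delta_i+\Gamma_{i,*}^2)$, so the heart of the argument is to show that every $\alpha$-consistent policy must play each suboptimal arm $\Omega(\log T)$ times, with the precise constant $1/I(f_i,f_*)$, and then to verify that the remaining terms of \eqref{RegExp} do not erode this contribution.

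First I would establish, for each $i\neq*$ with $\Delta_i>0$, the bound $\mathbb{E}[\tau_i(T)]\ge \frac{c_1\log T}{I(f_i,f_*)}(1-o(1))$ by a change-of-measure argument in the spirit of Lai and Robbins. Fix arm $i$ and replace its parameter $\theta_i$ by some $\lambda\in\mathcal{U}$ with $\xi(\lambda)<\xi_*$, so that arm $i$ becomes the unique minimizer of mean-variance in the perturbed model $\Theta^{(i)}$. Assumption~1 guarantees that $I(f(\cdot;\theta_i),f(\cdot;\lambda))$ can be made arbitrarily close to $I(f_i,f_*)$ by letting $\xi(\lambda)\uparrow\xi_*$. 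Under $\Theta^{(i)}$, $\alpha$-consistency (which must hold for \emph{every} model) forces $\mathbb{E}_{\Theta^{(i)}}[\tau_i(T)]\ge T-(K-1)T^\alpha$, hence $\mathbb{P}_{\Theta^{(i)}}[\tau_i(T)<\frac{b\log T}{I(f_i,f_\lambda)}]=O(T^{\alpha-1})$ by Markov's inequality applied to $T-\tau_i(T)$. Transferring this event back to the true model through the likelihood ratio $\prod_{s\le\tau_i(T)} f(X_i(t_i(s));\lambda)/f(X_i(t_i(s));\theta_i)$, together with a maximal inequality controlling the log-likelihood ratio on $\{\tau_i(T)<\frac{b\log T}{I(f_i,f_\lambda)}\}$, yields $\mathbb{P}[\tau_i(T)<\frac{b\log T}{I(f_i,f_\lambda)}]\to0$ for every $b<1-\alpha$. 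Optimizing $b\uparrow 1-\alpha$ and $\xi(\lambda)\uparrow\xi_*$ gives the asymptotic statement. For the finite-time version I would replace the almost-sure maximal inequality by a finite-time concentration bound for the log-likelihood ratio; Assumption~2 (sub-Gaussianity of $f(X;\lambda)$) is exactly what makes its increments light-tailed, producing the factor $c_2<1$ and the vanishing correction $\epsilon_{T_1}$.

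Second I would feed these bounds into Lemma~\ref{RegEx}. Writing $\sum_i\tau_i(T)(\overline{\mu}_i(T)-\mu_*)=M+D$ with the martingale part $M=\sum_{t}(X_{\pi(t)}(t)-\mu_{\pi(t)})$ and the drift part $D=\sum_{i\neq*}\tau_i(T)\Gamma_{i,*}$, the martingale term evaluates exactly to $\frac1T\mathbb{E}[M^2]=\sigma_*^2+\frac1T\sum_{i\neq*}\mathbb{E}[\tau_i(T)](\sigma_i^2-\sigma_*^2)$, which cancels the $+\sigma_*^2$ in \eqref{RegExp} up to an $o(1)$ term since $\mathbb{E}[\tau_i(T)]\le T^\alpha$. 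The first sum in \eqref{RegExp} already delivers $\sum_{i\neq*}\frac{c_1\log T}{I(f_i,f_*)}\Delta_i$. It then remains to show that $\sum_{i\neq*}\mathbb{E}[\tau_i(T)]\Gamma_{i,*}^2-\frac1T\mathbb{E}[D^2]\ge \sum_{i\neq*}\frac{c_1\log T}{I(f_i,f_*)}\Gamma_{i,*}^2-o(\log T)$ and that the cross term $\frac2T\mathbb{E}[MD]$, bounded by Cauchy--Schwarz together with Lemma~\ref{XS1}, is $o(\log T)$.

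I expect this last step to be the main obstacle, and it is where the non-additivity of mean-variance regret bites. Using the identity $\sum_{i\neq*}\mathbb{E}[\tau_i(T)]\Gamma_{i,*}^2-\frac1T\mathbb{E}[D^2]=\frac1T\sum_{i\neq*}\Gamma_{i,*}^2\,\mathbb{E}[\tau_i(T)(T-\tau_i(T))]-(\text{cross terms})$, the $\Gamma_{i,*}^2$ contribution survives only to the extent that $\frac1T\mathbb{E}[\tau_i(T)^2]$ and $\frac1T\mathbb{E}[\tau_i(T)\tau_j(T)]$ are $o(\log T)$; the crude deterministic bound $\tau_i(T)\le T$ is far too lossy here, producing $O(T^\alpha)$ errors that would destroy the term. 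The resolution must exploit $\alpha$-consistency beyond its first-moment form: when the play counts concentrate near their logarithmic means the relevant second moments are negligible and the term is retained, whereas any atypical inflation of $\tau_i(T)$ simultaneously inflates the already-accounted contribution $\mathbb{E}[\tau_i(T)]\Delta_i$, so that the total lower bound is preserved. Making this dichotomy quantitative---so that the residual is absorbed into the slack $c_2<1$ and $\epsilon_{T_1}$---is the delicate part of the argument and the point at which the proof genuinely departs from the classical risk-neutral lower bound.
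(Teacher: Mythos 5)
Your reduction and your change-of-measure step coincide with the paper's proof: the paper also lower-bounds $R_\pi(T)$ by $\widehat{R}_\pi(T)$, invokes Lemma~\ref{RegEx}, and proves essentially your second paragraph as Lemma~\ref{LowMSPr} (perturb arm $i$ into optimality using Assumption~1, apply $\alpha$-consistency plus Markov under the perturbed model, transfer back through the likelihood ratio, and control the likelihood ratio by the strong law asymptotically or by a Chernoff bound under Assumption~2, which is where the finite-time constant $c_2$ comes from). Up to that point you are reconstructing the paper's argument.

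The gap is in your third step, in two places. First, your claim that the cross term $\frac{2}{T}\mathbb{E}[MD]$ is $o(\log T)$ ``by Cauchy--Schwarz together with Lemma~\ref{XS1}'' is false. With $\mathbb{E}[M^2]=\sum_i\mathbb{E}[\tau_i]\sigma_i^2\le T\sigma_{\max}^2$ and $\mathbb{E}[D^2]\le (K-1)\Gamma^2\, T\sum_{j\ne *}\mathbb{E}[\tau_j]=O(T^{1+\alpha})$, Cauchy--Schwarz gives only $\frac{2}{T}|\mathbb{E}[MD]|=O(T^{\alpha/2})$, which dominates $\log T$ for any fixed $\alpha>0$; Lemma~\ref{XS1} does not repair this, since it controls $\mathbb{E}[\tau_i(\overline{\mu}_i-\mu_i)^2]$ and not the correlation with the play counts $\tau_j$. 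The paper never claims this term is additively negligible: it bounds the cross term by $2\sigma_{\max}\sum_i\sqrt{\mathbb{E}[\tau_i]}\,|\Gamma_{i,*}|$ and absorbs it \emph{multiplicatively} into the linear term, writing $\sum_i\mathbb{E}[\tau_i]\big(\Delta_i-2\sigma_{\max}|\Gamma_{i,*}|/\sqrt{\mathbb{E}[\tau_i]}-\sigma_{\max}^2/\mathbb{E}[\tau_i]\big)$, whose coefficient tends to $\Delta_i$ precisely because Lemma~\ref{LowMSPr} forces $\mathbb{E}[\tau_i]\gtrsim\log T\to\infty$. This is exactly the ``inflating $\tau_i$ also inflates $\mathbb{E}[\tau_i]\Delta_i$'' mechanism you gesture at in your closing paragraph, but you attach it to the wrong term and never carry it out.

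Second, the step you yourself flag as the main obstacle---retaining $\sum_{i\ne*}\mathbb{E}[\tau_i]\Gamma_{i,*}^2$ against $-\frac{1}{T}\mathbb{E}[D^2]$---is left unresolved, and your diagnosis of what it needs is off: no use of $\alpha$-consistency ``beyond its first-moment form'' is required. The paper's resolution is a short truncation argument. By Cauchy--Schwarz, $\sum_i\tau_i\Gamma_{i,*}^2-\frac{1}{T}\big(\sum_i\tau_i\Gamma_{i,*}\big)^2\ge 0$ pointwise, so its expectation only decreases when restricted to the event $\mathcal{E}=\{\tau_k\le T^{(1+\alpha)/2}\ \text{for all } k\ne *\}$; on $\mathcal{E}$ the per-play correction $\frac{1}{T}\sum_{j\ne*}\tau_j\Gamma_{i,*}\Gamma_{j,*}$ is at most $(K-1)\Gamma^2T^{-(1-\alpha)/2}=o(1)$; and Markov applied to the first-moment condition \eqref{alphacd} gives $\mathbb{P}[\overline{\mathcal{E}}]\le (K-1)T^{-(1-\alpha)/2}$, so $\mathbb{E}[\tau_i,\mathcal{E}]$ still inherits the $\frac{c_1\log T}{I(f_i,f_*)}$ lower bound from Lemma~\ref{LowMSPr}. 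These two devices---multiplicative absorption of the cross term and truncation of the quadratic term---are what your outline is missing, and without them it does not close.
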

\begin{proof}
The proof is based on the following lemma.
\begin{lemma}~\label{LowMSPr}
Let $\Theta$ be the given distribution model, and let $i\neq *$ denote the index of a suboptimal arm under $\Theta$. Let $\pi$ be an $\alpha-$consistent policy. Under Assumption~1, the number $\tau_i(T)$ of times arm $i$ is played under $\pi$ satisfies, for any constant $c_1<1-\alpha$,
\begin{eqnarray}
\lim_{T\rightarrow\infty}\mathbb{P}_{\mathcal{F}}[\tau_i(T)\ge \frac{c_1\log T}{I(f_i,f_*)}] = 1,
\end{eqnarray}
Furthermore, under Assumption~2, there exists $ T_0\in \mathbb{N}$ such that

\begin{eqnarray}
\mathbb{P}_{\mathcal{F}}[\tau_i(T)\ge \frac{c_1\log T}{I(f_i,f_*)}]\ge c_2,~~~~~\textit{for all}~~ T>T_0,
\end{eqnarray}
where constant $0<c_2<1$ is independent of $T$ and $\mathcal{F}$.
\end{lemma}

To prove this lemma, we construct a new distribution model $\mathcal{F}^{i}$ where arm $i\neq*$ is the optimal arm. The log likelihood ratio $\gamma$ between the two probability measures $\mathcal{F}$ and $\mathcal{F}^{i}$ is a key statistic to prove the lemma. Specifically, we show that it is unlikely that $\tau_i$ is smaller than the logarithmic term under two different cases of $\gamma\le c_5\log T$ and $\gamma>c_5\log T$. The former is shown by a change of measure argument and using the consistency assumption. The latter is shown by Chernoff bound when Assumption 2 is satisfied and by law of large numbers otherwise. For a detailed proof see Appendix~E.

To prove Theorem~\ref{MSL}, we establish lower bounds on the first three terms of regret given in Lemma~\ref{RegEx}. Lemma~\ref{LowMSPr} provides a lower bound on $\mathbb{E}[\tau_i]$. By showing a lower bound on the sum of the second and third terms we arrive at the theorem. For a detailed proof see Appendix F.
\end{proof}

In comparison with Lai and Robbins lower bound for risk-neutral MAB~\cite{Lai&Robbins85AAM}, Theorem~\ref{MSL} considers a larger class of policies (by allowing a policy to be consistent with respect to a specific $\alpha$ rather than for all $\alpha\in (0,1)$) and also provides a finite-time lower bound under Assumption 2. Note that the constant $c_1$ in Theorem~\ref{MSL} approaches one for policies that satisfy~\eqref{alphacd} for all $\alpha\in (0,1)$, leading to a bound corresponding to that in~\cite{Lai&Robbins85AAM}.

\subsection{Risk-Averse Learning Policies}\label{SecFourr}
The performance of MV-UCB was first analyzed in~\cite{Sani}, which showed that the model-specific regret of MV-UCB was upper bounded by $O(\sqrt{T})$. Theorem~\ref{T2} below gives a tighter analysis on the performance of MV-UCB, showing a $\log T$ regret order.
This result, together with the lower bound given in Theorem~\ref{MSL}, establishes the order optimality of the MV-UCB policy for the case of $\Delta>0$.

MV-UCB assigns an index $\eta(t)$ to each arm and plays the arm with the smallest index at time~$t$ (after playing every arm once). The index depends on the sample mean-variance calculated from past observations and the number of times that the arm has been played up to time $t$. Specifically, the index of arm~$i$ at time~$t$ is given by\footnote{The index and the analysis of the MV-UCB can be modified for locally sub-Gaussian distributions following similar lines as in~\cite{UCBEx}.}
\begin{eqnarray}
\eta_i(t)= \overline{\xi}_i(t)-b\sqrt{\frac{\log t}{\tau_i (t)}},
\end{eqnarray}
where $b$ is a policy parameter whose value depends on the risk measure (see Theorem~\ref{T2} below).

\begin{theorem}\label{T2}
Assume $\Delta>0$. The regret offered by the MV-UCB policy with $b\ge \frac{\sqrt3(2+\rho)}{\sqrt a}$ under the measure of mean-variance of observations is upper bounded by\begin{eqnarray}\nn
R_{MV-UCB}(T) &\le& \sum_{\neq*}(\frac{4b^2 \log T}{\min\{\Delta_i^2,4(2+\rho)^2\}} + 5)(\Delta_i+\Gamma_{i,*}^2)+\sigma_*^2\\
&&~~~+\min\{\sigma_{\max}^2(\sum_{i\neq*}\frac{\Gamma_{i,*}^2}{\Delta_i}+1),\frac{K}{a}\log T\}.
\end{eqnarray}
\end{theorem}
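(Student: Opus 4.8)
The plan is to split the MV-UCB regret into the proxy regret against the optimal single-arm policy $\widehat{\pi}^*$ and the residual gap between the true optimum $\pi^*$ and $\widehat{\pi}^*$. For the residual, Theorem~\ref{TheKtarin} applied to $\pi = \text{MV-UCB}$ immediately gives $R_{MV-UCB}(T) - \widehat{R}_{MV-UCB}(T) \le \min\{\sigma_{\max}^2(\sum_{i\neq*}\Gamma_{i,*}^2/\Delta_i + 1),\, (K/a)\log T\}$, which is exactly the second line of the claimed bound. It therefore remains to show $\widehat{R}_{MV-UCB}(T)$ is at most the first line, and here I would start from the closed form in Lemma~\ref{RegEx}. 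Since its third term is subtracted and is nonnegative (an expectation of a square, divided by $T$), dropping it only enlarges the right-hand side; moreover the $i=*$ summand vanishes because $\Delta_* = 0$ and $\Gamma_{*,*} = 0$. Hence $\widehat{R}_{MV-UCB}(T) \le \sum_{i\neq*}\mathbb{E}[\tau_i(T)](\Delta_i + \Gamma_{i,*}^2) + \sigma_*^2$, and the whole problem reduces to bounding the expected number of plays $\mathbb{E}[\tau_i(T)]$ of each suboptimal arm by $\frac{4b^2\log T}{\min\{\Delta_i^2,\, 4(2+\rho)^2\}} + 5$.

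To bound $\mathbb{E}[\tau_i(T)]$ I would run the standard optimism argument adapted to the mean-variance index $\eta_i(t) = \overline{\xi}_i(t) - b\sqrt{\log t/\tau_i(t)}$, recalling that the policy plays the arm of smallest index and that the target arm $*$ has the smallest $\xi$. Fix a suboptimal arm $i$ and set $\ell = \lceil \frac{4b^2\log T}{\min\{\Delta_i^2,\, 4(2+\rho)^2\}}\rceil$. If arm $i$ is played at time $t$ with $\tau_i(t)\ge\ell$, then $\eta_i(t)\le\eta_*(t)$ forces at least one of two confidence-violation events: (A) the optimal arm over-estimates, $\overline{\xi}_*(t) > \xi_* + b\sqrt{\log t/\tau_*(t)}$; or (B) arm $i$ under-estimates, $\overline{\xi}_i(t) < \xi_i - b\sqrt{\log t/\tau_i(t)}$. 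The third ``confidence-too-wide'' possibility $\Delta_i < 2b\sqrt{\log t/\tau_i(t)}$ is ruled out precisely because $\tau_i(t)\ge\ell\ge \frac{4b^2\log T}{\Delta_i^2}$. Writing $\mathbb{E}[\tau_i(T)]\le \ell + \sum_t \mathbb{P}(\text{(A) or (B)},\, \tau_i(t)\ge\ell)$ and union-bounding over the random sample sizes $\tau_*,\tau_i$, I would apply the concentration bound \eqref{narnia1} to event (A) and \eqref{narnia2} to event (B) with deviation $\delta = b\sqrt{\log t/s}$. Since $s\delta^2 = b^2\log t$, the choice $b\ge \sqrt{3}(2+\rho)/\sqrt{a}$ makes both exponents $as\delta^2/(1+\rho)^2$ and $as\delta^2/(2+\rho)^2$ at least $3\log t$, so each per-sample probability is $\le 2t^{-3}$, the inner sums over $s$ give $O(t^{-2})$, and the tail sum $\sum_t t^{-2}$ converges to a constant absorbed into the $+5$.

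The main obstacle, and the only genuinely new feature relative to the risk-neutral UCB analysis, is the restricted validity range $\delta\in(0,2+\rho]$ of the lower-tail inequality \eqref{narnia2}. For large gaps, when $\Delta_i^2 > 4(2+\rho)^2$, the naive threshold $\frac{4b^2\log T}{\Delta_i^2}$ would admit sample sizes $s$ for which the confidence radius $b\sqrt{\log t/s}$ exceeds $2+\rho$, where \eqref{narnia2} no longer applies to event (B). Enlarging the threshold to $\frac{b^2\log T}{(2+\rho)^2} = \frac{4b^2\log T}{4(2+\rho)^2}$ guarantees that for every $s\ge\ell$ the radius satisfies $b\sqrt{\log t/s}\le 2+\rho$, restoring applicability of the bound; this is exactly what the factor $\min\{\Delta_i^2,\, 4(2+\rho)^2\}$ in the denominator encodes (for small gaps the usual $\frac{4b^2\log T}{\Delta_i^2}$ already dominates and both constraints are met).

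Finally, I would substitute the resulting bound $\mathbb{E}[\tau_i(T)]\le \frac{4b^2\log T}{\min\{\Delta_i^2,\, 4(2+\rho)^2\}} + 5$ into the reduction of the first paragraph, producing $\sum_{i\neq*}(\frac{4b^2\log T}{\min\{\Delta_i^2,\,4(2+\rho)^2\}}+5)(\Delta_i+\Gamma_{i,*}^2) + \sigma_*^2$, and then add the Theorem~\ref{TheKtarin} residual to recover the stated inequality. The only remaining bookkeeping is to verify that the convergent tail sums of the per-time-step failure probabilities stay within the constant $5$, which follows from $\sum_{t\ge1} t^{-2} = \pi^2/6$ together with the exponent being at least $3$.
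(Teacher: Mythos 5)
Your proposal follows essentially the same route as the paper's own proof: the identical reduction in which Lemma~\ref{RegEx}'s negative third term is dropped and Theorem~\ref{TheKtarin} supplies the residual $\min\{\cdot,\cdot\}$ term, and the identical optimism argument of Lemma~\ref{tauL} with threshold $\lceil 4b^2\log T/\min\{\Delta_i^2,4(2+\rho)^2\}\rceil$, the same two deviation events for arm $i$ and arm $*$, the same role of the $\min$ in keeping the confidence radius within the validity range $\delta\le 2+\rho$ of \eqref{narnia2}, and the same union bound over sample sizes with Lemma~\ref{LemmaChernoff}. One cosmetic correction to your final bookkeeping: to land exactly on the constant $+5$ you should bound the tail sum over $t>\ell$ by $4\int_{\ell}^\infty t^{-2}\,dt\le 4$ (as the paper does), since using $4\sum_{t\ge 1}t^{-2}=2\pi^2/3$ together with the ceiling's $+1$ would exceed $5$.
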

\begin{proof}
From the regret expression given in~\eqref{RegExp}, we need to first bound $\mathbb{E}[\tau_i]$ for $i\neq*$. This is established in the following lemma with proof given in Appendix~{G}.
\begin{lemma}\label{tauL}
Set $b\ge \frac{\sqrt3(2+\rho)}{\sqrt a}$. The expected number of times a sub-optimal arm $i\neq*$ with $\Delta_i>0$ is played is upper bounded by
\begin{eqnarray}
\mathbb{E}[\tau_i(T)]\le \frac{4b^2 \log T}{\min\{\Delta_i^2,4(2+\rho)^2\}} + 5. \label{ubtu}
\end{eqnarray}
\end{lemma}
The third term in the regret expression in~\eqref{RegExp} is negative. Thus, we arrive at an upper bound on $\widehat{R}_{MV-UCB}(T)$ that translates to an upper bound on ${R}_{MV-UCB}(T)$ by applying Theorem~\ref{TheKtarin}.
See Appendix~H for a detailed proof.
\end{proof}

The model-specific regret of MV-UCB is linear in $T$ when $\Delta=0$ as discussed in~\cite{Sani}. An alternative policy in this case is MV-DSEE, a variation of the DSEE policy developed in~\cite{DSEE} for risk-neutral MAB. In the MV-DSEE policy, time is partitioned into two interleaving sequences: an
exploration sequence denoted by $\mathcal{E}(t)$ and an exploitation sequence. In the former, the player plays all arms in
a round-robin fashion. In the latter, the player plays the arm with the smallest sample mean-variance.

With the cardinality of the exploration sequence set to $\lceil f(T)\log T\rceil$ where $f(.)$ is a positive increasing diverging sequence with an arbitrarily slow rate, MV-DSEE offers an asymptotic regret order of $O(f(T)\log T)$ (which can be arbitrarily close to the optimal logarithmic order) over a fixed distribution model without the assumption of $\Delta>0$. 

\begin{theorem}
The regret of MV-DSEE policy under the measure of mean-variance of observations is upper bounded by
\begin{eqnarray}
R_{MV-DSEE}(T)=O(f(T)\log T),
\end{eqnarray}
where $f(T)$ is a positive increasing diverging sequence with an arbitrarily slow rate.
\end{theorem}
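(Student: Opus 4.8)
The plan is to pass from the true regret $R_{MV-DSEE}(T)$ to the proxy regret $\widehat{R}_{MV-DSEE}(T)$ and then to exploit the exact expression of Lemma~\ref{RegEx}. By Theorem~\ref{TheKtarin} we have $R_{MV-DSEE}(T)\le \widehat{R}_{MV-DSEE}(T)+\tfrac{K}{a}\log T$, and since $f$ diverges the additive $O(\log T)$ term is absorbed into $O(f(T)\log T)$; hence it suffices to prove $\widehat{R}_{MV-DSEE}(T)=O(f(T)\log T)$. In~\eqref{RegExp} the first term $\sum_i\mathbb{E}[\tau_i(T)]\Delta_i$ and the second term $\sum_i\mathbb{E}[\tau_i(T)]\Gamma_{i,*}^2$ vanish for $i=*$ (since $\Delta_*=\Gamma_{*,*}=0$), so both are controlled once $\mathbb{E}[\tau_i(T)]$ is bounded for the suboptimal arms; the negative third term only helps and is handled together with the sampling-noise contributions.

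First I would bound $\mathbb{E}[\tau_i(T)]$ for each $i\neq*$ by splitting the plays into the two regimes of the policy. The exploration plays are deterministic: with an exploration sequence of cardinality $\lceil f(T)\log T\rceil$ shared round-robin, arm $i$ is explored at most $\lceil f(T)\log T\rceil/K+1=O(f(T)\log T)$ times. In an exploitation slot $t$, arm $i$ is selected only if $\overline{\xi}_i(t)\le\overline{\xi}_*(t)$, which (since $\Delta_i=\xi_i-\xi_*>0$) forces a deviation of at least $\Delta_i/2$ in one of the two sample mean-variances, each computed from no fewer samples than the number $m(t)\gtrsim f(t)\log t/K$ of explorations accumulated by time $t$. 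Lemma~\ref{LemmaChernoff} then bounds the per-slot misidentification probability by $2\exp(-c\,m(t)\,\Delta_i^2)$ for a constant $c>0$ depending only on $a$ and $\rho$. Because $f$ diverges, the exponent eventually exceeds $2\log t$, so the tail sum over exploitation times is dominated by $\sum_t t^{-2}=O(1)$; the expected number of exploitation plays of arm $i$ is therefore a constant, and $\mathbb{E}[\tau_i(T)]=O(f(T)\log T)$. This makes the first two terms of~\eqref{RegExp} of order $f(T)\log T$.

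It remains to control the third term jointly with the second. Writing $\sum_i\tau_i(\overline{\mu}_i-\mu_*)=W+V$ with $W=\sum_i\tau_i(\overline{\mu}_i-\mu_i)=\sum_{t=1}^{T}(X_{\pi(t)}(t)-\mu_{\pi(t)})$ and $V=\sum_{i\neq*}\tau_i\Gamma_{i,*}$, the martingale structure of $W$ gives $\mathbb{E}[W^2]=\sum_i\mathbb{E}[\tau_i]\sigma_i^2\le T\sigma_{\max}^2$, so $\tfrac1T\mathbb{E}[W^2]=O(1)$, while $V^2\le T\sum_{i\neq*}\tau_i\Gamma_{i,*}^2$ and Cauchy--Schwarz make the cross term $\tfrac1T|\mathbb{E}[WV]|=O(\sqrt{f(T)\log T})$, which is of lower order. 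The surviving variance-type piece $\sum_{i\neq*}\mathbb{E}[\tau_i]\Gamma_{i,*}^2-\tfrac1T\mathbb{E}[V^2]$ is nonnegative pathwise (by the weighted inequality $\sum_i\tau_i\Gamma_{i,*}^2\ge\tfrac1T(\sum_i\tau_i\Gamma_{i,*})^2$) and bounded above by $\sum_{i\neq*}\mathbb{E}[\tau_i]\Gamma_{i,*}^2=O(f(T)\log T)$; Lemma~\ref{XS1} is available to absorb any residual stopping-time second moments at the $O(\log T)$ level. Collecting these estimates yields $\widehat{R}_{MV-DSEE}(T)=O(f(T)\log T)$, and the claimed bound on $R_{MV-DSEE}(T)$ follows.

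The main obstacle is the uniform control of the exploitation count: the gap $\Delta_i$ is arbitrary, unknown, and possibly vanishingly small, so no fixed exploration rate can guarantee a summable misidentification tail for every model. It is precisely the slowly diverging factor $f$ that forces the exponent $c\,m(t)\,\Delta_i^2$ to eventually exceed $2\log t$ for every positive $\Delta_i$, which is what lets the argument dispense with any quantitative lower bound on $\Delta$. A secondary difficulty, absent in the risk-neutral setting, is that the third term of~\eqref{RegExp} couples the random play counts with the sample means through second moments and cross correlations; I would tame these via the martingale identity for $W$, the weighted Cauchy--Schwarz inequality, and the stopping-time bound of Lemma~\ref{XS1}, all of which contribute only at the $O(f(T)\log T)$ level.
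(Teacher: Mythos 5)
Your proposal is correct and follows essentially the same route as the paper's proof: reduce $R_{MV-DSEE}(T)$ to $\widehat{R}_{MV-DSEE}(T)$ via Theorem~\ref{TheKtarin} (using the $\frac{K}{a}\log T$ branch of the min, since $\Delta$ may be small), bound $\mathbb{E}[\tau_i(T)]=O(f(T)\log T)$ for $i\neq *$ by the DSEE exploration/exploitation argument, and conclude from the regret expression of Lemma~\ref{RegEx} --- the only differences being that the paper simply cites~\cite{DSEE} for the count bound where you spell out the concentration argument via Lemma~\ref{LemmaChernoff}, and that the paper drops the negative third term of~\eqref{RegExp} outright, making your decomposition of that term into $W$ and $V$ correct but unnecessary. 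One caveat worth noting: like the paper's own proof, your exploitation-count argument requires $\Delta_i>0$ for every $i\neq *$ (it is the divergence of $f$ that handles arbitrarily small positive gaps, not exact ties), so both proofs implicitly exclude models with tied mean-variances but distinct means, for which the term $\mathbb{E}[\tau_i(T)]\Gamma_{i,*}^2$ need not be controllable.
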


\begin{proof}
Following similar steps as in the performance analysis of DSEE given in~\cite{DSEE}, we can show that for $i\neq*$,
\begin{eqnarray}\label{vaghannanato}
\mathbb{E}[\tau_i]=O(f(T)\log T).
\end{eqnarray}
Also similar to the proof of Theorem~\ref{T2}, we have
\begin{eqnarray}\nn
R_{MV-DSEE}(T)&\le&\sum_{i\neq*} \mathbb{E}[\tau_i(T)] (\Delta_i + \Gamma_{i,*}^2)+\min\{\sigma_{\max}^2(\sum_{i\neq*}\frac{\Gamma_{i,*}^2}{\Delta_i}+1),\frac{K}{a}\log T\}.
\end{eqnarray}
By substituting the bound on $\mathbb{E}[\tau_i]$ given in~\eqref{vaghannanato}, we arrive at the theorem.

\end{proof}

\section{Model-Independent Regret}\label{SecFive}
In this section, we consider the model-independent setting, in which the performance of a policy is measured against the worst-case reward model specific to the policy and the horizon length $T$.
Specifically, let $R_\pi(T;\Fc)$ denote the expected total performance loss of policy $\pi$ over a horizon of length $T$ for a reward model $\Fc$. The model-independent regret is given by, for each $T$,
\begin{eqnarray}
R_\pi(T)=\sup_{\mathcal{F}}R_\pi(T;\mathcal{F}),
\end{eqnarray}
and we are interested in the order (in terms of $T$) of a thus defined $R_\pi(T)$.
It is easy to see that for any MAB
problem, the model-independent regret order cannot be lower than
the model-specific regret order.

We establish an $\Omega(T^{2/3})$ lower bound on the model-independent regret of any policy.
Specifically, in the following theorem we show that there is distribution model such that the regret grows with $\Omega(T^{2/3})$.

\begin{theorem}~\label{Low1}
Consider the MAB problem under the measure of mean-variance of observations. The model-independent regret of any policy $\pi$ satisfies, for some constants $c_3>0$ and $T_2\in \mathbb{N}$,
\begin{eqnarray}\label{20fin}
R_\pi(T) \ge c_3 T^{2/3},~~~~~\textit{for all}~~ T>T_2.
\end{eqnarray}
\end{theorem}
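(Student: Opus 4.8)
The plan is to exhibit, for each horizon $T$, a two-armed instance (allowed to depend on $T$, since the model-independent regret $\sup_{\mathcal{F}}R_\pi(T;\mathcal{F})$ is taken per $T$) on which \emph{every} policy pays $\Omega(T^{2/3})$. The guiding principle is that mean-variance decouples the per-play cost of a suboptimal arm from its distinguishability: from Lemma~\ref{RegEx} the cost of pulling a suboptimal arm is governed by $\Gamma_{i,*}^2$ (the squared \emph{mean} gap), whereas which arm is optimal is governed by $\Delta$ (the \emph{mean-variance} gap), and by tuning variances these two can be made to live at different scales. Concretely, I would take two bounded-support (hence sub-Gaussian, with $(X-\mu)^2-\sigma^2$ sub-Gaussian) arms: arm $1$ with mean $0$ and variance $v$, and arm $2$ with mean $\Gamma$ (a fixed positive constant) and variance $v+\rho\Gamma\pm\Delta$, where $\Delta=c_0T^{-1/3}$. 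Let $\mathcal{F}$ use variance $v+\rho\Gamma+\Delta$ (so $\xi_2=v+\Delta>\xi_1=v$ and arm $1$ is optimal) and $\mathcal{F}'$ use $v+\rho\Gamma-\Delta$ (arm $2$ optimal). The two models differ only in the law of arm $2$, the mean gap $\Gamma$ is identical in both, and the per-sample KL is $d=O(\Delta^2)$.

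Next I would pass to the single-arm benchmark. Since $\Delta>0$, Theorem~\ref{TheKtarin} gives $R_\pi(T)\ge\widehat{R}_\pi(T)$, so it suffices to lower bound $\widehat{R}_\pi$. Expanding Lemma~\ref{RegEx} on $\mathcal{F}$ (with $*=1$, $\mu_*=0$), the dominant contribution is $\mathbb{E}_{\mathcal{F}}[g(\tau_2)]$ with $g(x)=(\Delta+\Gamma^2)x-\tfrac{\Gamma^2}{T}x^2$, while the remaining cross/square terms coming from the sample-mean deviations $S=\tau_1(\overline{\mu}_1-\mu_1)+\tau_2(\overline{\mu}_2-\mu_2)$ are negligible: $\tfrac1T\mathbb{E}[S^2]=O(\log T)$ by Lemma~\ref{XS1}, and $\tfrac{2\Gamma}{T}\mathbb{E}[\tau_2 S]=O(T^{1/2}\sqrt{\log T})$ by Cauchy--Schwarz, both $o(T^{2/3})$. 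The function $g$ is concave and nonnegative on $[0,T]$ with $g(T)=\Delta T$; I would record the two pointwise bounds $g(x)\ge\tfrac{\Gamma^2}{2}x$ for $x\le T/2$ and $g(x)\ge\Delta T$ for $x>T/2$. Setting the threshold $m=\lceil\Delta^{-2}\rceil=\Theta(T^{2/3})$, these combine to $g(x)\ge c'T^{2/3}$ on $\{x\ge m\}$ for a constant $c'=\min\{\tfrac{\Gamma^2}{2}\Delta^{-2}T^{-2/3},\,\Delta T^{1/3}\}>0$.

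I would then play the two models against each other through two complementary ``bad'' events. On $\mathcal{F}$ the bad event is over-exploration of the suboptimal arm $2$, giving $R_\pi(T;\mathcal{F})\ge c'T^{2/3}\,\mathbb{P}_{\mathcal{F}}(\tau_2\ge m)-o(T^{2/3})$. On $\mathcal{F}'$ (where arm $2$ is optimal) the bad event is under-exploration, $\{\tau_2<m\}$, which forces $\tau_1=T-\tau_2>T/2$ and hence, by the symmetric expansion of Lemma~\ref{RegEx} in terms of $\tau_1$, $R_\pi(T;\mathcal{F}')\ge\Delta T\,\mathbb{P}_{\mathcal{F}'}(\tau_2<m)-o(T^{2/3})$. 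Summing, $R_\pi(T;\mathcal{F})+R_\pi(T;\mathcal{F}')\ge \Theta(T^{2/3})\big(\mathbb{P}_{\mathcal{F}}(\tau_2\ge m)+\mathbb{P}_{\mathcal{F}'}(\tau_2<m)\big)-o(T^{2/3})$, so it remains to show the bracketed sum is bounded below by a positive constant.

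The main obstacle is exactly this change-of-measure step, because the naive divergence decomposition gives $\mathrm{KL}(\mathbb{P}_{\mathcal{F}}\Vert\mathbb{P}_{\mathcal{F}'})=\mathbb{E}_{\mathcal{F}}[\tau_2]\,d$, and $\mathbb{E}_{\mathcal{F}}[\tau_2]$ can be as large as $\Theta(T)$ even when the $\mathcal{F}$-regret is small (committing fully to arm $2$ costs only $\Delta T=T^{2/3}$), which would blow up the divergence to $T\Delta^2=T^{1/3}$. The fix I would use is a \emph{truncated} change of measure: the event $\{\tau_2\ge m\}=\{$the $m$-th pull of arm $2$ occurs by round $T\}$ is measurable with respect to the trajectory stopped at the $m$-th pull of arm $2$, along which at most $m$ informative samples are collected, so the \emph{restricted} divergence is at most $md=O(1)$. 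Applying the Bretagnolle--Huber inequality on this sub-$\sigma$-algebra yields $\mathbb{P}_{\mathcal{F}}(\tau_2\ge m)+\mathbb{P}_{\mathcal{F}'}(\tau_2<m)\ge\tfrac12 e^{-md}\ge c''>0$. With the balance $\Delta=c_0T^{-1/3}$ making the exploration cost $\tfrac{\Gamma^2}{2}m\asymp\Gamma^2\Delta^{-2}\asymp T^{2/3}$ and the commitment cost $\Delta T\asymp T^{2/3}$ of the same order, this gives $\max\{R_\pi(T;\mathcal{F}),R_\pi(T;\mathcal{F}')\}\ge c_3T^{2/3}$ for all $T>T_2$, which is the claim. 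I expect the truncation/stopping-time justification of the bounded restricted divergence, together with the careful bookkeeping that the $S$-terms stay $o(T^{2/3})$ uniformly in the (possibly large) play counts, to be the only genuinely delicate points.
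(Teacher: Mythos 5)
Your proposal is correct, and it reuses the paper's skeleton: a two-armed instance in which only arm~2's law changes between models $\mathcal{F},\mathcal{F}'$, a mean gap $\Gamma=\Theta(1)$ decoupled from a mean-variance gap calibrated to $\Delta=\Theta(T^{-1/3})$, reduction to $\widehat{R}_\pi$ via Theorem~\ref{TheKtarin} and Lemma~\ref{RegEx}, and a Bretagnolle--Huber change of measure (the paper's Lemma~\ref{lm6}). Where you genuinely diverge is in how the change-of-measure step is controlled, which is the heart of the proof. The paper applies Lemma~\ref{lm6} \emph{per round} to the full reward trajectory, so the exponent carries the untruncated quantity $\mathbb{E}_{\mathcal{F}}[\tau_2(T)]d_0\Delta^2$ (see \eqref{57s}--\eqref{58z}); it then neutralizes the possibility that $\mathbb{E}_{\mathcal{F}}[\tau_2]$ is huge by a second lower bound that grows linearly in $\mathbb{E}_{\mathcal{F}}[\tau_2]$, namely $\widehat{R}_\pi(T;\mathcal{F})\ge \mathbb{E}_{\mathcal{F}}\tau_2\Delta+d_4\mathbb{E}_{\mathcal{F}}\tau_2\Gamma^2-\cdots$ obtained from $\frac{1}{T}\mathbb{E}[\tau_1\tau_2]\ge d_4\mathbb{E}[\tau_2]$ (the $d_3/d_4$ argument around \eqref{spokh73}, valid unless regret is already linear), and finally minimizes the sum of the two bounds explicitly over $x=\mathbb{E}_{\mathcal{F}}\tau_2$, which yields the fixed-$\Delta$ bound \eqref{20m} of order $\frac{\log T}{\Delta^2}$ as a by-product before substituting $\Delta\propto T^{-1/3}$. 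You instead threshold at $m=\lceil\Delta^{-2}\rceil$, bound the divergence restricted to the trajectory stopped at the $m$-th pull of arm~2 by $md=O(1)$, and apply Bretagnolle--Huber once to the event $\{\tau_2\ge m\}$, pairing it with pointwise bounds on the concave cost $g(x)=(\Delta+\Gamma^2)x-\frac{\Gamma^2}{T}x^2$ on the two regions $x\le T/2$ and $x>T/2$. Your route is the more modern minimax argument: the truncation kills the dependence on the unbounded $\mathbb{E}_{\mathcal{F}}[\tau_2]$ at the source, avoids the explicit optimization, and dispenses with the paper's auxiliary $d_4$ lemma; its price is exactly the point you flag, namely justifying the divergence decomposition $I\bigl(\mathbb{P}_{\mathcal{F}}|_{\mathcal{H}_\sigma},\mathbb{P}_{\mathcal{F}'}|_{\mathcal{H}_\sigma}\bigr)=\mathbb{E}_{\mathcal{F}}[\tau_2(\sigma)]\,d\le md$ at the stopping time $\sigma$, a standard but not free Wald-type identity, whereas the paper never needs any stopping-time measurability argument. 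Two details to pin down in your write-up: the error bookkeeping can be simplified, since Wald's second identity (already used in Appendix~D) gives $\frac1T\mathbb{E}[S^2]\le\sigma_{\max}^2=O(1)$ rather than the $O(\log T)$ you get from Lemma~\ref{XS1}; and your requirement that arm~2 keep \emph{exactly} the same mean under $\mathcal{F}$ and $\mathcal{F}'$ forces at least a three-point support (a two-point law with fixed atoms cannot change variance at fixed mean, and moving atoms makes the KL infinite) --- a symmetric three-point perturbation works, and the paper's Gaussian/Bernoulli pair shows that an $O(\Delta)$ drift in the means is harmless anyway.
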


\begin{proof}
The proof is based on a coupling argument between two bandit problems with $K=2$ and under distribution models $\mathcal{F}$ and $\mathcal{F}'$, respectively. The optimal arm is switched under these two models while the difference $\Delta$ between the mean-variances of the optimal and the suboptimal arm is kept the same. First, it is shown that under at least one of these two distribution models, for some constants $c_4>0$ and $T_2\in \mathbb{N}$,
\begin{eqnarray}
R_\pi(T)\ge \frac{c_4\log T}{\Delta^2},~~~~~\textit{for all}~~ T>T_2\label{20m}.
\end{eqnarray}

Under both $\mathcal{F}$ and $\mathcal{F}'$, a normal distribution is assigned to arm one. Two different Bernolli distributions are assigned to arm two such that arm two is the sub-optimal arm under $\mathcal{F}$ and the optimal arm under $\mathcal{F}'$. Through a coupling argument we show that for the specific distribution assignments designed here,
\begin{eqnarray}\label{24jad}
\mathbb{P}_\mathcal{F}[\pi(t)=2]+\mathbb{P}_{\mathcal{F}'}[\pi(t)=1]\ge \exp(-\mathbb{E}_\mathcal{F} [\tau_2(T)] d_0 \Delta^2)
\end{eqnarray}
for some constant $d_0>0$.
A lower bound on regret can be derived from~\eqref{24jad}, which increases as $\mathbb{E}_\mathcal{F} [\tau_2(T)]$ decreases. On the other hand, a higher $\mathbb{E}_\mathcal{F} [\tau_2(T)]$ indicates a higher regret under distribution assignment $\mathcal{F}$. We show that, for any value of $\mathbb{E}_\mathcal{F} [\tau_2(T)]$, the maximum of these two lower bounds on regret is no smaller than the desired lower bound given in{~\eqref{20m}}.
A proper assignment of $\Delta=d_6T^{-\frac{1}{3}}$, for some constant $d_6$, gives the lower bound on model-independent regret in{~\eqref{20fin}}. For a detailed proof, see {Appendix~I}.
\end{proof}

MV-DSEE policy was also considered in~\cite{Sani} and was shown to achieve $O(T^{2/3})$ model-independent regret performance with the cardinality of the exploration sequence set to $|\mathcal{E}(T)|=\lceil T^{2/3}\rceil$.
The lower bound given in Theorem~\ref{Low1} shows that MV-DSEE is order optimal under the model-independent setting.

\section{Simulations}\label{SecSix}
In this section, we provide numerical examples on the performance of MV-UCB. We first study the effect of risk tolerance factor $\rho$ on the rewards obtained by a risk-averse policy. In Fig.~\ref{Sim1} two sample returns of MV-UCB are shown. By decreasing $\rho$ the variation in the observation decreases, although it is at a price of a lower average return.

\begin{figure}[htb]
\centering
  \begin{tabular}{@{}cccc@{}}

  \psfrag{r}[c]{\scriptsize{$\rho=1$}}
\scalefig{0.50}\epsfbox{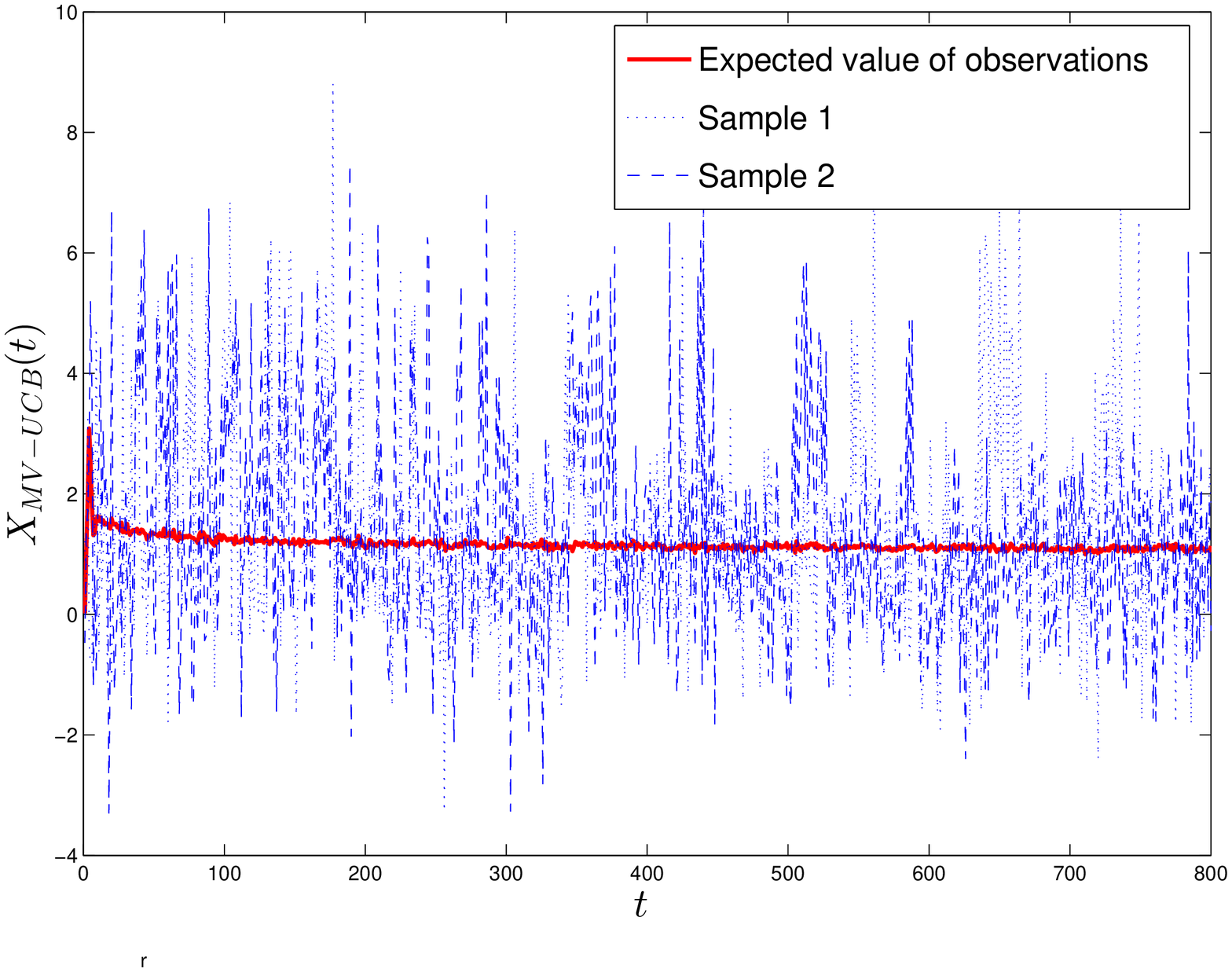}&
    \psfrag{r}[c]{\scriptsize{$\rho=5$}}
\scalefig{0.50}\epsfbox{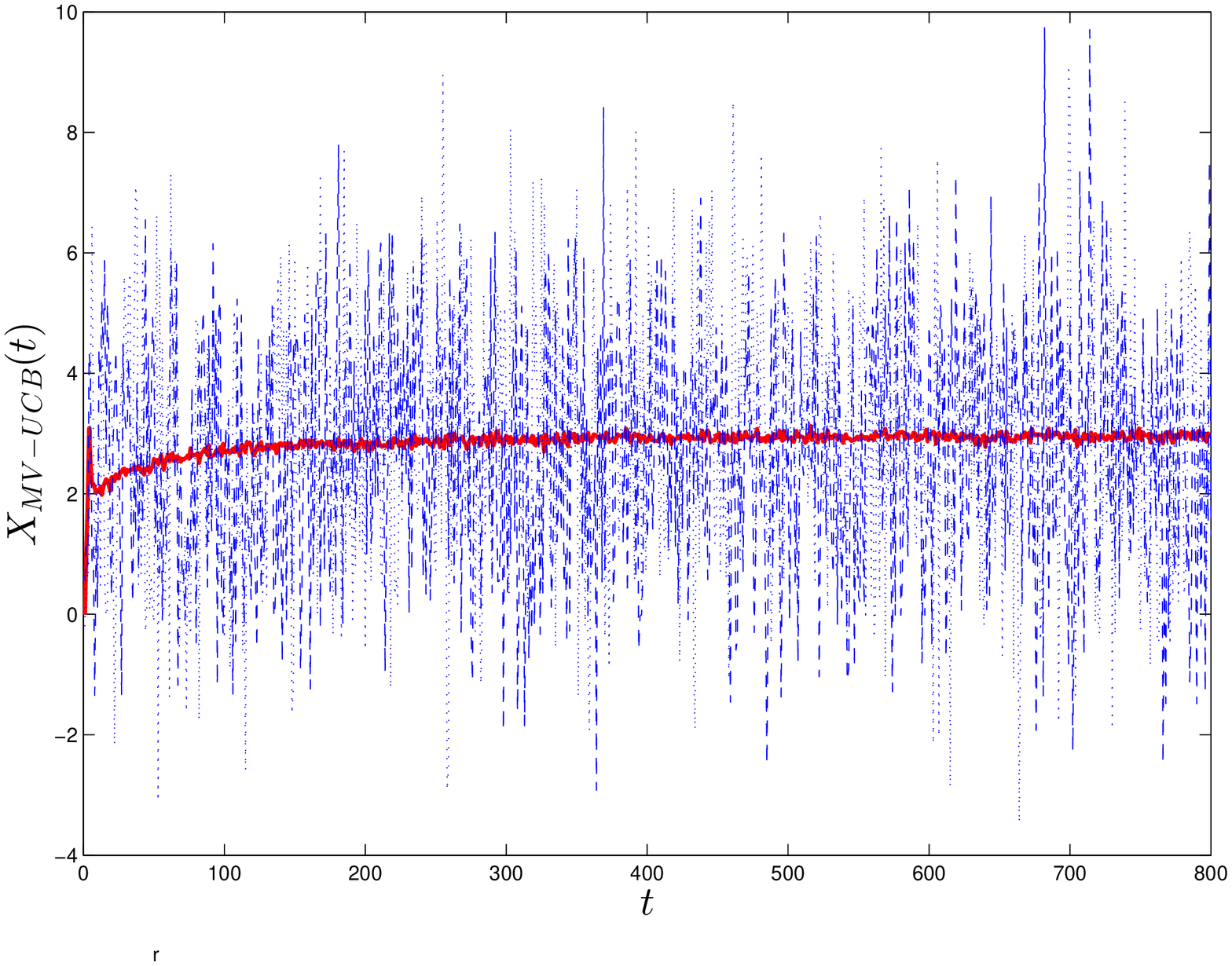}&\\

  \end{tabular}
  \caption{The sample observations of MV-UCB under different risk-tolerance factor $\rho$ ($K=4$, with normal reward distributions of parameters $\mu_1=0$, $\mu_2=1$, $\mu_3=2$, $\mu_4=3$, $\sigma_1=1$, $\sigma_2=1$, $\sigma_3=2$, $\sigma_4=2$).}\label{Sim1}
\end{figure}

Fig.~\ref{deltas} shows the regret performance of MV-UCB for different values of $\Delta$. The simulation shows that for a fixed value of $\Gamma$, the regret offered by MV-UCB increases as $\Delta$ decreases. A linear regret order is expected as $\Delta$ approaches $0$.


\begin{figure}[htb]\label{bbb}
\centering
  \begin{tabular}{@{}cccc@{}}

\scalefig{0.6}\epsfbox{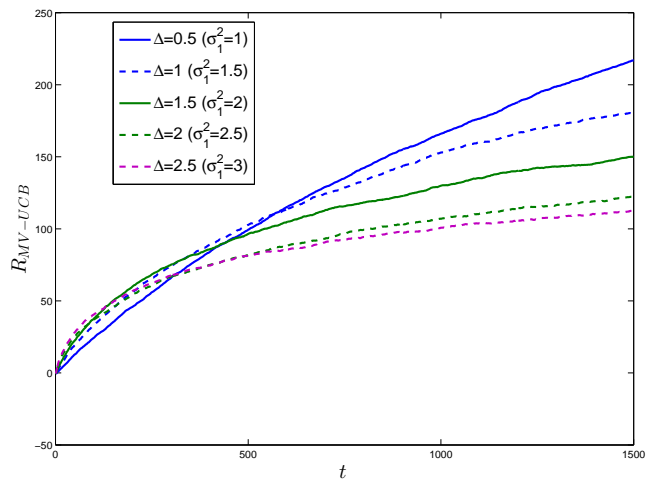}&\\
  \end{tabular}
  \caption{The performance of MV-UCB ($\rho=1$, $K=2$ with normal reward distributions of parameters $\mu_1=0$, $\mu_2=0.5$, $\sigma_2^2=1$).} \label{deltas}
\end{figure}

\section{Conclusion and Discussion}

\par We studied risk-averse MAB problems under the risk measure of mean-variance of observations. We fully characterized the regret growth rate in both the model-specific and the model-independent settings by establishing lower bounds and developing order-optimal online learning policies.

The risk-averse MAB model reduces to the classic risk-neutral MAB when $\rho\rightarrow\infty$. Specifically, when $\rho\rightarrow \infty$, the mean-variance approaches to the negative of the mean multiplied by $\rho$. Thus, the mean-variance measure degenerates to a scaled mean value measure. With $\Delta_i$ replaced by $-\rho\Gamma_{i,*}$ and $\Gamma_{i,*}^2$ negligible against the term $-\rho\Gamma_{i,*}$, the model-specific bounds given in Theorems~\ref{MSL} and~\ref{T2} reproduce the bounds on risk-neutral regret.
Regarding the model-independent regret, however, as it is shown in this paper, the regret growth rate is different from the risk-neutral MAB. This difference is expected due to the reason that the worst-case assignment of the distributions takes into account the value of $\rho$. Thus, even for a large value of $\rho$, a proper choice of the distributions with a sufficiently small difference $\Gamma_{i,*}$ in the mean values results in a case where the difference in variance is comparable with $-\rho\Gamma_{i,*}$ and cannot be ignored.

Another scenario in which the risk-averse MAB under mean-variance approaches the risk-neutral MAB is when all arms have the same mean (i.e., $\Gamma\rightarrow 0$). Specifically, the time variations in the reward process have two sources: the randomness of the observed reward from each arm and the switching across arms with different expected values. The latter diminishes when $\Gamma\rightarrow0$.
Consequently, when $\Gamma\rightarrow0$, the regret in mean-variance of observations becomes summable over time and is given by a weighted sum of the
expected number of times that each suboptimal arm is played with the weights given by the difference in the variance of a suboptimal arm from the optimal arm. It is thus similar to the risk-neutral regret with the difference in mean replaced by the difference in variance. Thus, as expected, the model-specific bounds given in Theorems~\ref{MSL} and~\ref{T2} degenerate to the bounds on risk-neutral regret, except that $\Delta_i$ is the difference in the variance rather than the mean.
Under the model-independent setting, the value of $\Gamma$ is chosen for the worst-case assignment of the distribution model and cannot be forced to zero. The above connection through $\Gamma\rightarrow 0$ between the regret in mean-variance and the regret in mean is thus absent in the model-independent setting.

The model-specific regret lower bound obtained in Theorem~\ref{MSL} applies to only single-parameter distribution models, the same assumption as used in the lower bound obtained by Lai and Robbins in \cite{Lai&Robbins85AAM} for risk-neutral MAB. Under the measure of mean-variance of observations,
the mean and the mean-variance of each arm are dependent through the single parameter $\theta_i$ of the distribution. Thus, the values of $\Delta_i$ and $\Gamma_{i,*}$ cannot be set independently. As a result, the $\Omega(T^{\frac{2}{3}})$ regret lower bound in the model-independent setting (where the worst-case values of $\Delta_i$ and $\Gamma_{i,*}$ can be chosen independently) cannot be deduced from Theorem~\ref{MSL}.
The regret performance of MV-UCB and MV-DSEE as given in Theorem~3 and Theorem~4, however, does not require the assumption of single-parameter distribution models. It is thus perhaps reasonable to expect that the logarithmic order in the lower bound holds for general distribution models.

Our regret lower bounds that hold for all $T\ge T_0$ for some constant $T_0\in \mathbb{N}$ should be interpreted as finite-time results since one can always find a leading constant large enough (in the case of upper bounds) or small enough (in the case of lower bounds) to accommodate the first $T_0$ terms. Indeed, how large or small the leading constant needs to be to have results hold for all $T$ can be obtained in our proof procedure. However, such a practice is tedious and leads to an overly complicated expression.

For the risk-neutral MAB, an improved version of the UCB policy developed in \cite{UCBRev} was shown to achieve the optimal regret order under both the model-specific and model-independent settings. We have shown in this paper that MV-DSEE approaches both the model-specific and model-independent regret lower bounds, but requiring different values for the cardinality of the exploration sequence.
Whether a single policy without any change in its parameter values can achieve the optimal regret order under both settings remains an open question. A satisfactory answer to this question is involved and requires a separate investigation.

\section*{Acknowledgment}
The authors would like to thank the anonymous reviewers for their invaluable suggestions and comments that have significantly improved both the presentation and the results of this paper.

\subsection*{Appendix A: Proof of Lemma 1 }\label{AppA}

Let $\overline{\mu}_s$ be the sample mean obtained from $s$ i.i.d. observations. By Chernoff-Hoeffding bound~\cite{Chernoff,UCBEx}, for all $a\in(0,\frac{1}{2\zeta_0}]$,

\begin{eqnarray}
\left\{
\begin{array}{ll}\nn
\mathbb{P}[\overline{\mu}_s-\mu(X)<-\delta_1]&\le  \exp(-{as\delta_1^2}),\\
\mathbb{P}[\overline{\mu}_s-\mu(X)>\delta_1]&\le  \exp(-{as\delta_1^2}),
\end{array}\right.
\end{eqnarray}
and, for all $a\in(0,\frac{1}{2\zeta_1}]$,
\begin{eqnarray}\nn
\left\{
\begin{array}{ll}
\mathbb{P}[|\frac{1}{s}\sum_{t=1}^{s}(X(t)-\mu(X))^2-\sigma^2(X)|<-\delta_2]&\le  \exp(-{as\delta_2^2}),\\
\mathbb{P}[|\frac{1}{s}\sum_{t=1}^{s}(X(t)-\mu(X))^2-\sigma^2(X)|>\delta_2]&\le \exp(-{as\delta_2^2}),
\end{array}\right.
\end{eqnarray}
where $X(t)$ is the $t$th observation of the random variable $X$.
The mean-variance deviation term can be written as
\begin{eqnarray}\nn
\overline{\xi}_s-\xi(X)&=& \frac{1}{s}\sum_{t=1}^{s}(X(t)-\overline{\mu}_s)^2-\rho\overline{\mu}_s-\xi(X)\\\nn
&=&\frac{1}{s}\sum_{t=1}^{s}(X(t)-\mu(X))^2+(\mu(X)-\overline{\mu}_s)^2\\\nn
&+&\frac{2}{s}\sum_{t=1}^{s}(X(t)-\mu(X))(\mu(X)-\overline{\mu}_s)-\rho\overline{\mu}_s-\xi(X)\\
&=&\frac{1}{s}\sum_{t=1}^{s}(X(t)-\mu(X))^2-\sigma^2(X) - (\mu(X)-\overline{\mu}_s)^2 -\rho(\overline{\mu}_s-\mu(X)).~~~~~~\label{43}
\end{eqnarray}
Notice that the second term on the right hand side of~\eqref{43} is always negative. For $\delta_1=\frac{\delta}{1+\rho}$ and $a\le\frac{1}{2\zeta}$, substituting $\overline{\xi}_s-\xi(X)$ from~\eqref{43}
\begin{eqnarray}\nn
\mathbb{P}[\overline{\xi}_s-\xi(X)>\delta] &\le&  \mathbb{P}[\frac{1}{s}\sum_{t=1}^{s}(X(t)-\mu(X))^2-\sigma^2(X) >\delta_1] + \mathbb{P}[\overline{\mu}_s-\mu(X)<-\delta_1]\\\nn
&\le& \exp(-{as\delta_1^2}) + \exp(-{as\delta_1^2}) \\
&=&2 \exp(-\frac{as\delta^2}{(1+\rho)^2}).
\end{eqnarray}

To prove~\eqref{narnia2} let $\delta_1=\frac{\delta}{2+\rho}$. Notice that, $\delta_1\le 1$ when $\delta\le2+\rho$ and $(\mu(X)-\overline{\mu}_s)<\delta_1$ implies $(\mu(X)-\overline{\mu}_s)^2<\delta_1$.
For $a\le\frac{1}{2\zeta}$, substituting $\overline{\xi}_s-\xi(X)$ from~\eqref{43}
\begin{eqnarray}\nn
\mathbb{P}[\overline{\xi}_s-\xi(X)<-\delta] &\le&  \mathbb{P}[\frac{1}{s}\sum_{t=1}^{s}(X(t)-\mu(X))^2-\sigma^2(X) < -\delta_1] + \mathbb{P}[\overline{\mu}_s-\mu(X)>\delta_1]\\\nn
&\le& \exp(-{as\delta_1^2}) + \exp(-{as\delta_1^2}) \\
&=&2 \exp(-\frac{as\delta^2}{(2+\rho)^2}).
\end{eqnarray}

\subsection*{Appendix B: Proof of Lemma 2 }

Let $\overline{\mu}_\pi=\frac{1}{T}\sum_{t=1}^T X_{\pi(t)}(t)$ and $\mu_\pi=\mathbb{E}[\overline{\mu}_\pi]$. In order to show the expression of regret given in~\eqref{RegEx}, we expand the cumulative variance term.
\begin{eqnarray}\nn
\mathbb{E}[\sum_{t=1}^T (X_{\pi(t)}(t)-\overline{\mu}_\pi)^2]
&=&  \mathbb{E}[\sum_{i=1}^K\sum_{s=1}^{\tau_i(T)}(X_i(t_i(s))-\overline{\mu}_\pi)^2]\\\nn
&=&  \mathbb{E}[\sum_{i=1}^K\sum_{s=1}^{\tau_i(T)}(X_i(t_i(s))-{\mu}_i+{\mu}_i-\overline{\mu}_\pi)^2]\\\nn
&=&  \mathbb{E}[\sum_{i=1}^K\sum_{s=1}^{\tau_i(T)}((X_i(t_i(s))-{\mu}_i)^2+ ({\mu}_i-\overline{\mu}_\pi)^2\\\label{rv21}
&&~~~+2(X_i(s)-{\mu}_i)({\mu}_i-\overline{\mu}_\pi) )].
\end{eqnarray}
The first term on the RHS of~\eqref{rv21} equals to, by Wald identity,
\begin{eqnarray}\label{rv22}
\mathbb{E}[\sum_{i=1}^K\sum_{s=1}^{\tau_i(T)}((X_i(s)-{\mu}_i)^2]=\sum_{i=1}^K\mathbb{E}\tau_i\sigma_i^2.
\end{eqnarray}
The second term can be written as
\begin{eqnarray}\nn
\mathbb{E}[\sum_{i=1}^K\tau_i({\mu}_i-\overline{\mu}_\pi)^2]&=&\mathbb{E}[\sum_{i=1}^K\tau_i({\mu}_i-\mu_\pi+\mu_\pi-\overline{\mu}_\pi)^2]\\\nn
&=&\mathbb{E}[\sum_{i=1}^K\tau_i(({\mu}_i-\mu_\pi)^2+(\mu_\pi-\overline{\mu}_\pi)^2+2({\mu}_i-\mu_\pi)(\mu_\pi-\overline{\mu}_\pi))]\\\nn
&=&\sum_{i=1}^K\mathbb{E}[\tau_i({\mu}_i-\mu_\pi)^2]+\mathbb{E}[T(\mu_\pi-\overline{\mu}_\pi)^2]\\\label{rv23}
&&~~~+2\mathbb{E}[\sum_{i=1}^K\tau_i({\mu}_i-\mu_\pi)(\mu_\pi-\overline{\mu}_\pi)].
\end{eqnarray}
The third term can be written as
\begin{eqnarray}\nn
\mathbb{E}[\sum_{i=1}^K2\tau_i(\overline{\mu}_i-\mu_i)(\mu_i-\overline{\mu}_\pi)]&=&2\mathbb{E}[\sum_{i=1}^K\tau_i(\overline{\mu}_i-\mu_i)(\mu_i-\mu_\pi)+\tau_i(\overline{\mu}_i-\mu_i)(\mu_\pi-\overline{\mu}_\pi)]\\\label{rv24}
&=&2\mathbb{E}[\sum_{i=1}^K\tau_i(\overline{\mu}_i-\mu_i)(\mu_\pi-\overline{\mu}_\pi)].
\end{eqnarray}

From~\eqref{rv22},~\eqref{rv23} and~\eqref{rv24}, we have

\begin{eqnarray}\nn
\mathbb{E}[\sum_{t=1}^T (X_{\pi(t)}(t)-\overline{\mu}_\pi)^2]&=&\sum_{i=1}^K\mathbb{E}\tau_i\sigma_i^2+\sum_{i=1}^K\mathbb{E}[\tau_i({\mu}_i-\mu_\pi)^2]+\mathbb{E}[T(\mu_\pi-\overline{\mu}_\pi)^2]\\\nn
&&~~~+2\mathbb{E}[\sum_{i=1}^K\tau_i({\mu}_i-\mu_\pi)(\mu_\pi-\overline{\mu}_\pi)]+2\mathbb{E}[\sum_{i=1}^K\tau_i(\overline{\mu}_i-\mu_i)(\mu_\pi-\overline{\mu}_\pi)]\\\nn
&=& \sum_{i=1}^K\mathbb{E}\tau_i\sigma_i^2+\sum_{i=1}^K\mathbb{E}[\tau_i({\mu}_i-\mu_\pi)^2]+\mathbb{E}[T(\mu_\pi-\overline{\mu}_\pi)^2]\\\nn
&&~~~+2\mathbb{E}[\sum_{i=1}^K\tau_i(\overline{\mu}_i-\mu_\pi)(\mu_\pi-\overline{\mu}_\pi)]\\\label{dum}
&=& \sum_{i=1}^K\mathbb{E}\tau_i\sigma_i^2+\sum_{i=1}^K\mathbb{E}[\tau_i({\mu}_i-\mu_\pi)^2]+\mathbb{E}[T(\mu_\pi-\overline{\mu}_\pi)^2]-2\mathbb{E}[T(\mu_\pi-\overline{\mu}_\pi)^2]~~~~~~\\\label{rv25}
&=& \sum_{i=1}^K\mathbb{E}\tau_i\sigma_i^2+\sum_{i=1}^K\mathbb{E}[\tau_i({\mu}_i-\mu_\pi)^2]-\mathbb{E}[T(\mu_\pi-\overline{\mu}_\pi)^2].
\end{eqnarray}
To arrive at~\eqref{dum}, $\sum_{i=1}^K\tau_i(T)=T$ and $\sum_{i=1}^K\tau_i(T)\overline{\mu}_i(T)=T\overline{\mu}_\pi$ are used. Similarly, $\sum_{i=1}^K\tau_i(T){\mu}_i=T{\mu}_\pi$ and it can be shown that

{{\begin{eqnarray}\nn
\sum_{i=1}^K\mathbb{E}[\tau_i({\mu}_i-\mu_\pi)^2]&=&\sum_{i=1}^K\mathbb{E}[\tau_i({\mu}_i-\frac{\sum_{j=1}^K\mathbb{E}\tau_j\mu_j}{T})^2]\\\nn
&=&\frac{1}{T^2}\sum_{i=1}^K\mathbb{E}[\tau_i(\sum_{j=1}^K\mathbb{E}\tau_j\Gamma_{i,j})^2]\\\nn
&=&\frac{1}{T^2}\mathbb{E}[\tau_*(\sum_{j=1}^K\mathbb{E}\tau_j\Gamma_{*,j})^2]+\frac{1}{T^2}\sum_{i\neq*}\mathbb{E}[\tau_i(\sum_{j\neq*}\mathbb{E}\tau_j\Gamma_{i,j}+\mathbb{E}\tau_*\Gamma_{i,*})^2]\\\nn
&=&\frac{1}{T^2}\mathbb{E}[(T-\sum_{i\neq*}\tau_i)(\sum_{j=1}^K\mathbb{E}\tau_j\Gamma_{*,j})^2]+\frac{1}{T^2}\sum_{i\neq*}\mathbb{E}[\tau_i(\sum_{j\neq*}\mathbb{E}\tau_j\Gamma_{i,j}+(T-\sum_{j\neq*}\mathbb{E}\tau_j)\Gamma_{i,*})^2]\\\nn
&=&\frac{1}{T}(\sum_{j=1}^K\mathbb{E}\tau_j\Gamma_{*,j})^2-\frac{1}{T^2}\sum_{i\neq*}\mathbb{E}[\tau_i(\sum_{j=1}^K\mathbb{E}\tau_j\Gamma_{*,j})^2]\\\nn
&&~~+\frac{1}{T^2}\sum_{i\neq*}\mathbb{E}[\tau_i(T\Gamma_{i,*}+\sum_{j\neq*}\mathbb{E}\tau_j\Gamma_{*,j})^2]\\\nn
&=&\frac{1}{T}(\sum_{j=1}^K\mathbb{E}\tau_j\Gamma_{*,j})^2-\frac{1}{T^2}\sum_{i\neq*}\mathbb{E}[\tau_i(\sum_{j=1}^K\mathbb{E}\tau_j\Gamma_{*,j})^2]\\\nn
&&~~+\sum_{i\neq*}\mathbb{E}\tau_i\Gamma_{i,*}^2+\frac{1}{T^2}\sum_{i\neq*}\mathbb{E}[\tau_i(\sum_{j\neq*}\mathbb{E}\tau_j\Gamma_{*,j})^2]+\frac{2}{T}\sum_{i\neq*}\mathbb{E}[\tau_i\Gamma_{i,*}(\sum_{j=1}^K\mathbb{E}\tau_j\Gamma_{*,j})]~~~\\\label{rv251}
&=&\sum_{i\neq*}\mathbb{E}\tau_i\Gamma_{i,*}^2-\frac{1}{T}(\sum_{j=1}^K\mathbb{E}\tau_j\Gamma_{*,j})^2.
\end{eqnarray}}}
For the third term on the RHS of~\eqref{rv25}, we have
{\begin{eqnarray}\nn
&&\hspace{-3em}\mathbb{E}[T(\mu_\pi-\overline{\mu}_\pi)^2]\\\nn
&=&\frac{1}{T}\mathbb{E}[(\sum_{i=1}^K\sum_{s=1}^{\tau_i}X_i(t_i(s))-\sum_{i=1}^K\mathbb{E}\tau_i\mu_i)^2]\\\nn
&=&\frac{1}{T}\mathbb{E}[(\sum_{i=1}^K\sum_{s=1}^{\tau_i}(X_i(t_i(s))-\mu_i)+\sum_{i=1}^K(\tau_i-\mathbb{E}\tau_i)\mu_i)^2]\\\nn
&=&\frac{1}{T}\mathbb{E}[(\sum_{i=1}^K\sum_{s=1}^{\tau_i}(X_i(t_i(s))-\mu_i))^2]+\frac{1}{T}\mathbb{E}[(\sum_{i=1}^K(\tau_i-\mathbb{E}\tau_i)\mu_i)^2]\\\nn
&&~~~+\frac{2}{T}\mathbb{E}[(\sum_{i=1}^K\sum_{s=1}^{\tau_i}(X_i(t_i(s))-\mu_i))(\sum_{i=1}^K(\tau_i-\mathbb{E}\tau_i)\mu_i)]\\\nn
&=&\frac{1}{T}\mathbb{E}[(\sum_{i=1}^K\tau_i(\overline{\mu}_i-\mu_i))^2]+\frac{1}{T}\mathbb{E}[(\sum_{i=1}^K(\tau_i-\mathbb{E}\tau_i)\Gamma_{i,*})^2]\\\label{dum2}
&&~~~+\frac{2}{T}\mathbb{E}[(\sum_{i=1}^K\tau_i(\overline{\mu}_i-\mu_i))(\sum_{i=1}^K(\tau_i-\mathbb{E}\tau_i)\Gamma_{i,*})]\\\nn
&=&\frac{1}{T}\mathbb{E}[(\sum_{i=1}^K\tau_i(\overline{\mu}_i-\mu_i))^2]+ \frac{1}{T}\mathbb{E}[(\sum_{i=1}^K\tau_i\Gamma_{i,*})^2]-\frac{1}{T}(\sum_{i=1}^K\mathbb{E}\tau_i\Gamma_{i,*})^2\\\label{rv26}
&&~~~+\frac{2}{T}\mathbb{E}[(\sum_{i=1}^K\tau_i(\overline{\mu}_i-\mu_i))(\sum_{i=1}^K\tau_i\Gamma_{i,*})].
\end{eqnarray}}
Equation~\eqref{dum2} follows from  $\sum_{i=1}^K(\tau_i-\mathbb{E}\tau_i)\mu_i=(\tau_*-\mathbb{E}\tau_*)\mu_*+\sum_{i\neq*}(\tau_i-\mathbb{E}\tau_i)\mu_i=
-\sum_{i\neq*}(\tau_i-\mathbb{E}\tau_i)\mu_*+\sum_{i\neq*}(\tau_i-\mathbb{E}\tau_i)\mu_i=\sum_{i\neq*}(\tau_i-\mathbb{E}\tau_i)\Gamma_{i,*}$. We know that for any random variable $X$, $\sigma^2(X)=\mathbb{E}[X^2]-\mathbb{E}[X]^2$. To arrive at~\eqref{rv26}, set $X=\sum_{i=1}^K\tau_i\Gamma_{i,*}$ also notice that $\frac{2}{T}\mathbb{E}[(\sum_{i=1}^K\tau_i(\overline{\mu}_i-\mu_i)(\sum_{i=1}^K\mathbb{E}\tau_i\Gamma_{i,*})]=0$.

Thus, from~\eqref{rv25},~\eqref{rv251} and~\eqref{rv26}, we have

\begin{eqnarray}\nn
\mathbb{E}[\sum_{t=1}^T (X_{\pi(t)}(t)-\overline{\mu}_\pi)^2]&=&\sum_{i=1}^K\mathbb{E}\tau_i\sigma_i^2+\sum_{i\neq*}\mathbb{E}\tau_i\Gamma_{i,*}^2-\frac{1}{T}\mathbb{E}[(\sum_{i=1}^K\tau_i(\overline{\mu}_i-\mu_i))^2]-\frac{1}{T}\mathbb{E}[(\sum_{i=1}^K\tau_i\Gamma_{i,*})^2]\\\nn
&&~~~-\frac{2}{T}\mathbb{E}[\sum_{i=1}^K\tau_i(\overline{\mu}_i-\mu_i)(\sum_{i=1}^K\tau_i\Gamma_{i,*})]\\\label{rr1}
&=&\sum_{i=1}^K\mathbb{E}\tau_i\sigma_i^2+\sum_{i\neq*}\mathbb{E}\tau_i\Gamma_{i,*}^2-\frac{1}{T}\mathbb{E}[(\sum_{i=1}^K\tau_i(\overline{\mu}_i-\mu_*))^2].
\end{eqnarray}

Now we can show the expression for $\widehat{R}_{\pi}(T)$ for any policy $\pi$ that plays arm $i$ for $\tau_i$ times
\begin{eqnarray}\nn
\widehat{R}_{\pi}(T)&=&\xi_{\pi}(T)-\xi_{\widehat{\pi}^*}(T)\\\nn
&=&\sum_{i=1}^K\mathbb{E}\tau_i\xi_i+\sum_{i\neq*}\mathbb{E}\tau_i\Gamma_{i,*}^2-\frac{1}{T}\mathbb{E}[(\sum_{i=1}^K\tau_i(\overline{\mu}_i-\mu_*))^2]-T\xi_*+\frac{1}{T}\mathbb{E}[T(\overline{\mu}_*-\mu_*)]\\\nn
&=&\sum_{i=1}^K\mathbb{E}\tau_i\Delta_i+\sum_{i=1}^K\mathbb{E}\tau_i\Gamma_{i,*}^2-\frac{1}{T}\mathbb{E}[(\sum_{i=1}^K\tau_i(\overline{\mu}_i-\mu_*))^2]+\sigma_*^2
\end{eqnarray}
as desired.

\subsection*{Appendix C: Proof of Lemma~\ref{XS1} }

In order to prove~\eqref{Lemma32}, we write the expected value of $\tau(\overline{\mu}-\mu)^2$ divided by $\log T$ as integrating the tail probability. For the tail probability we have, for a real number $x>0$\footnote{For the locally sub-Gaussian distributions, a similar bound on $\mathbb{P}[\tau(\overline{\mu}-\mu)^2>x\log^2 T]$ can be proven which results in an $O(\log^2T) $ term on the RHS of~\eqref{Lemma32}. Consequently, the second term on the RHS of~\eqref{Alandige} becomes $O(\log^2T)$, which does not affect the results on regret order developed in subsequent sections.}
\begin{eqnarray}\nn
\mathbb{P}[\tau(\overline{\mu}-\mu)^2>x\log T]&\le& \mathbb{P}[\max_{1\le s \le T}s(\overline{\mu}_{s}-\mu)^2>x\log T]\\\nn
&=& \mathbb{P}[\max_{1\le s \le T}\sqrt s|\overline{\mu}_{s}-\mu|>\sqrt{x\log T}]\\\nn
&\le&\sum_{s=1}^T  \mathbb{P}[|\overline{\mu}_{s}-\mu|>\sqrt{\frac{x\log T}{s}}]\\\nn
&\le& \sum_{s=1}^T2\exp(-a x\log T)\\\nn
&=&2T^{-ax+1}.
\end{eqnarray}
Now, we can write
\begin{eqnarray}\nn
\mathbb{E}[\frac{\tau (\overline{\mu}-\mu)^2}{\log T}]&=&\int_0^\infty \mathbb{P}[\frac{\tau (\overline{\mu}-\mu)^2}{\log T}>x]dx\\\nn
&\le& \frac{1}{a}+\int_{\frac{1}{a}}^\infty \mathbb{P}[\frac{\tau (\overline{\mu}-\mu)^2}{\log T}>x]dx\\\nn
&\le& \frac{1}{a}+\int_{\frac{1}{a}}^\infty2T^{-ax+1}dx\\\nn
&=&\frac{1}{a}+2\frac{T^{-ax+1}}{a\log T}|^\frac{1}{a}_\infty\\\nn
&=&\frac{1}{a}(1+\frac{2}{\log T}).
\end{eqnarray}
Thus, multiplying by $\log T$, we have
\begin{eqnarray}\nn
\mathbb{E}[\tau (\overline{\mu}-\mu)^2]\le \frac{1}{a}(\log T+ 2).
\end{eqnarray}

\subsection*{Appendix D: Proof of Theorem~\ref{TheKtarin} }

Since $\xi_{\pi^*}\le \xi_{\widehat{\pi}^*}$, it is straightforward to see that
\begin{eqnarray}
R_{\pi}(T)-\widehat{R}_{\pi}(T)=\xi_{\pi}(T)-\xi_{\pi^*}(T)-(\xi_{\pi}(T)-\xi_{\widehat{\pi}^*}(T))\ge 0.
\end{eqnarray}

For the upper bound, we have
\begin{eqnarray}\nn
R_{\pi}(T)-\widehat{R}_{\pi}(T)&=&\xi_{\widehat{\pi}^*}(T)-\xi_{\pi^*}(T)\\\label{56en}
&=&-\widehat{R}_{\pi^*}(T).
\end{eqnarray}

From Lemma~\ref{RegEx}, we have

\begin{eqnarray}
\widehat{R}_{\pi^*}(T)&=&\sum_{i=1}^K\mathbb{E}\tau_i\Delta_i+\sum_{i=1}^K\mathbb{E}\tau_i\Gamma_{i,*}^2-\frac{1}{T}\mathbb{E}[(\sum_{i=1}^K\tau_i(\overline{\mu}_i-\mu_*))^2]+\sigma_*^2,
\end{eqnarray}
where $\tau_i$ are the number of times arm $i$ is played by $\pi^*$.
We have, by Cauchy-Schwartz inequality,
\begin{eqnarray}\nn
\frac{1}{T}\mathbb{E}[(\sum_{i=1}^K\tau_i(\overline{\mu}_i-\mu_*))^2]&=&\frac{1}{T}\mathbb{E}[(\sum_{i=1}^K\tau_i(\overline{\mu}_i-\mu_i))^2]+\frac{1}{T}\mathbb{E}[(\sum_{i=1}^K\tau_i\Gamma_{i,*})^2]\\\nn
&&~~~+\frac{2}{T}\mathbb{E}[\sum_{i=1}^K\tau_i(\overline{\mu}_i-\mu_i)(\sum_{i=1}^K\tau_i\Gamma_{i,*})]\\\nn
&\le&\frac{1}{T}\mathbb{E}[(\sum_{i=1}^K\tau_i(\overline{\mu}_i-\mu_i))^2]+\frac{1}{T}\mathbb{E}[(\sum_{i=1}^K\tau_i\Gamma_{i,*})^2]\\\nn
&&+\frac{2}{T}\sqrt{\mathbb{E}[(\sum_{i=1}^K\tau_i(\overline{\mu}_i-\mu_i))^2]\mathbb{E}[(\sum_{i=1}^K\tau_i\Gamma_{i,*})^2]}\\\nn
&=&\frac{1}{T}\sum_{i=1}^K\mathbb{E}\tau_i\sigma_i^2+\frac{1}{T}\mathbb{E}[(\sum_{i=1}^K\tau_i\Gamma_{i,*})^2]\\\label{Tue1}
&&~~~+\frac{2}{T}\sqrt{\sum_{i=1}^K\mathbb{E}\tau_i\sigma_i^2}\sqrt{\mathbb{E}[(\sum_{i=1}^K\tau_i\Gamma_{i,*})^2]}.
\end{eqnarray}
To arrive at~\eqref{Tue1}, we also use $\mathbb{E}[(\sum_{i=1}^K\tau_i(\overline{\mu}_i-\mu_i))^2]=\sum_{i=1}^K\mathbb{E}\tau_i\sigma_i^2$ as a result of Wald's second identity.
For the second term on the RHS we have, by applying again Cauchy-Schwartz inequality,
\begin{eqnarray}\nn
\frac{1}{T}\mathbb{E}[(\sum_{i=1}^K\tau_i\Gamma_{i,*})^2]&\le&\frac{1}{T}\mathbb{E}[(\sum_{i=1}^K\tau_i)(\sum_{i=1}^K\tau_i\Gamma_{i,*}^2)]\\\label{Tue2}
&=&\sum_{i=1}^K\mathbb{E}\tau_i\Gamma_{i,*}^2.
\end{eqnarray}
For a set of positive real numbers $h_i$, we have $\sqrt{\sum_{i} h_i}\le \sum_{i}\sqrt{h_i}$. We can apply this inequality to the third term on the RHS of~\eqref{Tue1}
and from~\eqref{Tue1},~\eqref{Tue2} and $\frac{1}{T}\sum_{i=1}^K\mathbb{E}\tau_i\sigma_i^2\le\sigma_{\max}^2$ (where $\sigma_{\max}=\max_i{\sigma_i}$) , we have
\begin{eqnarray}\label{kkks0}
\frac{1}{T}\mathbb{E}[(\sum_{i=1}^K\tau_i(\overline{\mu}_i-\mu_*))^2]\le \sigma_{\max}^2+\sum_{i=1}^K\mathbb{E}\tau_i\Gamma_{i,*}^2+2\sigma_{\max}\sum_{i=1}^K\sqrt{\mathbb{E}\tau_i\Gamma_{i,*}^2}.
\end{eqnarray}

Thus we can write
\begin{eqnarray}\nn
\widehat{R}_{\pi^*}(T)&\ge& \sum_{i=1}^K\mathbb{E}\tau_i\Delta_i+\sum_{i=1}^K\mathbb{E}\tau_i\Gamma_{i,*}^2-\sum_{i=1}^K\mathbb{E}\tau_i\Gamma_{i,*}^2-2\sigma_{\max}\sqrt{\sum_{i=1}^K\mathbb{E}\tau_i\Gamma_{i,*}^2}+\sigma_*^2-\sigma_{\max}^2\\\nn
&\ge&\sum_{i=1}^K\mathbb{E}\tau_i\Delta_i-2\sigma_{\max}\sum_{i=1}^K\sqrt{\mathbb{E}\tau_i}|\Gamma_{i,*}|-\sigma_{\max}^2\\\nn
&\ge&\sum_{i=1}^K\min_{x\ge0}(x^2\Delta_i-2\sigma_{\max}|\Gamma_{i,*}|x)-\sigma_{\max}^2\\\nn
&=&-\sum_{i\neq*}\frac{\sigma_{\max}^2\Gamma_{i,*}^2}{\Delta_i}-\sigma_{\max}^2.
\end{eqnarray}
This gives a lower bound on $\widehat{R}_{\pi^*}(T)$ which translates to an upper bound on $R_{\pi}(T)-\widehat{R}_{\pi}(T)$ by~\eqref{56en}.
Although this lower bound is a constant independent of $T$, it grows unboundedly when $\Delta$ approaches 0. We next drive another lower bound on $\widehat{R}_{\pi^*}(T)$ that is independent of $\Delta$.

\begin{eqnarray}\nn
\widehat{R}_\pi(T) &=&\sum_{i=1}^K\mathbb{E}\tau_i\Delta_i+\sum_{i\neq*}\mathbb{E}\tau_i\Gamma_{i,*}^2-\frac{1}{T}\mathbb{E}[(\sum_{i=1}^K\tau_i(\overline{\mu}_i-\mu_*))^2]+\sigma_*^2\\\label{tue3}
&\ge&\sum_{i=1}^K\mathbb{E}\tau_i\Delta_i+\sum_{i\neq*}\mathbb{E}\tau_i\Gamma_{i,*}^2-\sum_{i=1}^K\mathbb{E}[\tau_i(\overline{\mu}_i-\mu_*)^2]\\\nn
&=&\sum_{i=1}^K\mathbb{E}\tau_i\Delta_i+\sum_{i\neq*}\mathbb{E}\tau_i\Gamma_{i,*}^2-\sum_{i=1}^K\mathbb{E}[\tau_i(\overline{\mu}_i-\mu_i+\Gamma_{i,*})^2]\\\nn
&=&\sum_{i=1}^K\mathbb{E}\tau_i\Delta_i-\sum_{i=1}^K\mathbb{E}[\tau_i(\overline{\mu}_i-\mu_i)^2]\\\label{kkks1}
&\ge&\sum_{i=1}^K\mathbb{E}\tau_i\Delta_i-\frac{K}{a}\log T\\\nn
&\ge&-\frac{K}{a}\log T.
\end{eqnarray}
Inequality~\eqref{tue3} holds as a result of Cauchy-Schwartz inequality (similar to~\eqref{Tue2}) and~\eqref{kkks1} holds by Lemma~\ref{XS1}.
\subsection*{Appendix E: Proof of Lemma~\ref{LowMSPr} }
For $k\neq*$, construct $\mathcal{F}^{k}$, by only changing the distribution of arm $k$ to $f'_k$ such that arm $k$ is the optimal arm ($-\delta<\xi'_k-\xi_*<0$) and $|I(f_k,f_*)- I(f_k,f'_k)|\le \epsilon$ for arbitrary small $\epsilon$. The possibility of such a model is a result of Assumption 1.
Let $\gamma$ denote the log-likelihood ratio between the $\mathcal{F}$ and $\mathcal{F}^k$: $\gamma=\log \frac{f_k(X_k(t_k(1)))...f_k(X_k(t_k(\tau_k)))}{f'_k(X_k(t_k(1)))...f'_k(X_k(t_k(\tau_k)))}$. We show that it is unlikely to have $\tau_k<\frac{c_1\log T}{I(f_k,f'_k)}$ under two different scenarios for $\gamma$.

First, consider $\gamma>c_5\log T$ for a constant $c_5>c_1$. We have
\begin{eqnarray}\nn
\mathbb{P}_{\mathcal{F}}[\tau_k<\frac{c_1\log T}{I(f_k,f'_k)}, \gamma> c_5\log T]&=& \mathbb{P}_{\mathcal{F}}[\tau_k<\frac{c_1\log T}{I(f_k,f'_k)}, \sum_{s=1}^{\tau_k}\log\frac{f_k(X_k(s))}{f'_k(X_k(s))}> c_5\log T]\\\nn
&\le&\mathbb{P}_{\mathcal{F}}[ \max_{t\le \frac{c_1\log T}{I(f_k,f'_k)}}\sum_{s=1}^{t}\log\frac{f_k(X_k(s))}{f'_k(X_k(s))}> \frac{c_5\log T}{I(f_k,f'_k)}I(f_k,f'_k)]\\\nn
&\le&\mathbb{P}_{\mathcal{F}}[ \max_{t\le \frac{c_1\log T}{I(f_k,f'_k)}}\frac{1}{\frac{c_1\log T}{I(f_k,f'_k)}}\sum_{s=1}^{t}\log\frac{f_k(X_k(s))}{f'_k(X_k(s))}> \frac{c_5}{c_1}I(f_k,f'_k)].
\end{eqnarray}
By strong law of large numbers $\frac{1}{t}\sum_{s=1}^{t}\log\frac{f_k(X_k(s))}{f'_k(X_k(s))}\rightarrow I(f_k,f'_k)$ $a.s.$ as $t\rightarrow \infty$. Notice that $\mathbb{E}_{\mathcal{F}}[\log\frac{f_k(X_k(s))}{f'_k(X_k(s))}]=I(f_k,f'_k)$. Thus,\\ $\max_{t\le \frac{c_1\log T}{I(f_k,f'_k)}}\frac{1}{\frac{c_1\log T}{I(f_k,f'_k)}}\sum_{s=1}^{t}\log\frac{f_k(X_k(s))}{f'_k(X_k(s))}\rightarrow I(f_k,f'_k)$ $a.s.$ as $T\rightarrow \infty$. We thus have
\begin{eqnarray}\nn
\mathbb{P}_{\mathcal{F}}[ \max_{t\le \frac{c_1\log T}{I(f_k,f'_k)}}\frac{1}{\frac{c_1\log T}{I(f_k,f'_k)}}\sum_{s=1}^{t}\log\frac{f_k(X_k(s))}{f'_k(X_k(s))}> \frac{c_5}{c_1}I(f_k,f'_k)]\rightarrow 0,~\textit{as}~T\rightarrow\infty.
\end{eqnarray}
For $\gamma> c_5\log T$, by strong law of large numbers, we have
\begin{eqnarray}\label{SLL}
\mathbb{P}_{\mathcal{F}}[\tau_k<\frac{c_1\log T}{I(f_k,f'_k)}, \gamma> c_5\log T]\rightarrow 0,~\textit{as}~T\rightarrow\infty.
\end{eqnarray}
Also, when Assumption 2 is satisfied, $\log f'_k(X)$ and $\log f_k(X)$ have sub-Gaussian distributions. Thus, $\log \frac{f'_k(X)}{f_k(X)}$ has sub-Gaussian distribution and using Chernoff-Hoeffding bound we can prove an upper bound for $\mathbb{P}_{\mathcal{F}}[\tau_k<\frac{c_1\log T}{I(f_k,f'_k)}, \gamma> c_5\log T]$, for finite $T$. Specifically,

{\small{\begin{eqnarray}\nn
\mathbb{P}_{\mathcal{F}}[\tau_k<\frac{c_1\log T}{I(f_k,f'_k)}, \gamma> c_5\log T]&=& \mathbb{P}_{\mathcal{F}}[\tau_k<\frac{c_1\log T}{I(f_k,f'_k)}, \sum_{s=1}^{\tau_k}\log\frac{f_k(X_k(s))}{f'_k(X_k(s))}> c_5\log T]\\\nn
&\le&\mathbb{P}_{\mathcal{F}}[\max_{t< \frac{c_1\log T}{I(f_k,f'_k)}}\sum_{s=1}^{t}\log\frac{f_k(X_k(s))}{f'_k(X_k(s))}> c_5\log T]\\\nn
&\le&\sum_{t=1}^{\frac{c_1\log T}{I(f_k,f'_k)}}\mathbb{P}_{\mathcal{F}}[\frac{1}{t}\sum_{s=1}^{t}\log\frac{f_k(X_k(s))}{f'_k(X_k(s))}-\frac{1}{t}c_1\log T>{\frac{1}{t}c_5\log T}-\frac{1}{t}c_1\log T ]\\\label{ed1}
&\le& \sum_{t=1}^{\frac{c_1\log T}{I(f_k,f'_k)}}\mathbb{P}_{\mathcal{F}}[\frac{1}{t}\sum_{s=1}^{t}\log\frac{f_k(X_k(s))}{f'_k(X_k(s))}-I(f_k,f'_k)>{\frac{1}{t}c_5\log T}-\frac{1}{t}c_1\log T ]~~~~~~\\\label{b2}
&\le&\sum_{t=1}^{\frac{c_1\log T}{I(f_k,f'_k)}}\exp(-a_1(c_5-c_1)^2\log^2 T/t)\\\label{NR1}
&\le&{\frac{c_1\log T}{I(f_k,f'_k)}}T^{-a_1I(f_k,f'_k)\frac{(c_5-c_1)^2}{c_1}}.
\end{eqnarray}}}
Inequality~\eqref{ed1} holds since $I(f_k,f'_k)\le\frac{1}{t}c_1\log T$ and~\eqref{b2} holds according to Chernoff-Hoeffding bound\footnote{For the locally sub-Gaussian distributions, a similar upper bound can be obtained based on~\eqref{nine01}. Specifically,~\eqref{NR1} will be replaced by ${\frac{c_1\log T}{I(f_k,f'_k)}}T^{-a_2(c_5-c_1)}$ where $a_2>0$ is a constant independent of $T$. The lemma can then be similarly proved.}. We point out that the Chernoff-Hoeffding bound constant $a_1$ is different from the constant specified in~\eqref{CHBnd} since we have a different random variable here.


Next, we consider $\gamma\le c_5\log T$. By Markov inequality, we have

\begin{eqnarray}\nn
\mathbb{P}_{\mathcal{F}^{k}}[\tau_k<\frac{c_1\log T}{I(f_k,f'_k)}]&=& \mathbb{P}_{\mathcal{F}^{k}}[T-\tau_k\ge T-\frac{c_1\log T}{I(f_k,f'_k)}]\\\label{this}
&\le& \frac{\mathbb{E}_{\mathcal{F}^{k}}[T-\tau_k]}{T-\frac{c_1\log T}{I(f_k,f'_k)}}.
\end{eqnarray}
We can change the probability measure from $\mathcal{F}$ to $\mathcal{F}^k$ as follows. Let $\mathcal{S}(T)$ be the set of all observations over a time horizon with length $T$ that satisfy a particular event. We have
\begin{eqnarray}\nn
\mathbb{P}_{\mathcal{F}}[\mathcal{S}(T)]&=&\mathbb{E}_{\mathcal{F}}[\mathbb{I}_{\mathcal{S}(T)}]\\\nn
&=&\int_{\mathcal{S}(T)}\prod_{i=1}^K\prod_{s=1}^{\tau_i}f_i(x_i(t_i(s)))\prod_{i=1}^K\prod_{s=1}^{\tau_i}dx_i(t_i(s))\\\nn
&=&\int_{\mathcal{S}(T)}\prod_{i=1,i\neq k}^K\prod_{s=1}^{\tau_i}f_i(x_i(t_i(s)))\prod_{s=1}^{\tau_k}f'_k(x_i(t_i(s)))\frac{f_k(x_i(t_i(s)))}{f'_k(x_i(t_i(s)))}\prod_{i=1}^K\prod_{s=1}^{\tau_i}dx_i(t_i(s))\\\nn
&=&\mathbb{E}_{\mathcal{F}^k}[\mathbb{I}_{\mathcal{S}(T)}\Pi_{s=1}^{\tau_k}\frac{f_k(x_i(t_i(s)))}{f'_k(x_i(t_i(s)))}]\\\nn
&=&\mathbb{E}_{\mathcal{F}^k}[\mathbb{I}_{\mathcal{S}(T)}e^\gamma].
\end{eqnarray}

Using~\eqref{this} and a change of probability measure from $\mathcal{F}$ to $\mathcal{F}^k$ we have
\begin{eqnarray}\nn
\mathbb{P}_{\mathcal{F}}[\tau_k<\frac{c_1\log T}{I(f_k,f'_k)}, \gamma\le c_5\log T]&=& \mathbb{E}_{\mathcal{F}}[\mathbb{I}_{\{\tau_k<\frac{c_1\log T}{I(f_k,f'_k)}, \gamma\le c_5\log T\}}]\\\nn
&\le&\mathbb{E}_{\mathcal{F}^k}[\mathbb{I}_{\{\tau_k<\frac{c_1\log T}{I(f_k,f'_k)}, \gamma\le c_5\log T\}}e^\gamma]\\\nn
&\le& T^{c_5}\mathbb{E}_{\mathcal{F}^k}[\mathbb{I}_{\{\tau_k<\frac{c_1\log T}{I(f_k,f'_k)}, \gamma\le c_5\log T\}}]\\\nn
&\le& T^{c_5}\mathbb{P}_{\mathcal{F}^{k}}[\tau_k<\frac{c_1\log T}{I(f_k,f'_k)}]\\\nn
&\le& \frac{T^{c_5}\mathbb{E}_{  \mathcal{F}^k}[T-\tau_k]}{T-\frac{c_1\log T}{I(f_k,f'_k)}}\\\label{NR2}
&\le& \frac{KT^{c_5+\alpha}}{T-\frac{c_1\log T}{I(f_k,f'_k)}},
\end{eqnarray}
where to arrive at the last inequality we use the $\alpha-$consistency assumption.
Form~\eqref{SLL},~\eqref{NR2} and the fact that $|I(f_k,f_*)- I(f_k,f'_k)|$ can be arbitrarily small, we conclude that, for $c_5<1-\alpha$
\begin{eqnarray}\nn
\mathbb{P}_{\mathcal{F}}[\tau_k<\frac{c_1\log T}{I(f_k,f_*)}]\rightarrow0,~\textit{as}~T\rightarrow\infty.
\end{eqnarray}
Equivalently,
\begin{eqnarray}\nn
\mathbb{P}_{\mathcal{F}}[\tau_k\ge\frac{c_1\log T}{I(f_k,f_*)}]\rightarrow 1,~\textit{as}~T\rightarrow\infty.
\end{eqnarray}
Form~\eqref{NR1} and~\eqref{NR2}, we conclude that, for $c_5<1-\alpha$, when Assumption 2 is satisfied

\begin{eqnarray}\nn
\mathbb{P}_{\mathcal{F}}[\tau_k<\frac{c_1\log T}{I(f_k,f_*)}]\le {\frac{c_1\log T}{I(f_k,f'_k)}}T^{-a_1I(f_k,f'_k)\frac{(c_5-c_1)^2}{c_1}}+\frac{KT^{c_5+\alpha}}{T-\frac{c_1\log T}{I(f_k,f'_k)}}.
\end{eqnarray}
Thus, there is a $T_0\in \mathbb{N}$ such that for $T\ge T_0$,
\begin{eqnarray}\nn
\mathbb{P}_{\mathcal{F}}[\tau_k\ge\frac{c_1\log T}{I(f_k,f_*)}]\ge c_2.
\end{eqnarray}
for some constant $c_2>0$ independent of $T$ and $\mathcal{F}$. We emphasize that the constant $c_1$ and $c_5$ are chosen to satisfy $c_1<c_5<1-\alpha$.

\subsection*{Appendix F: Proof of Theorem~\ref{MSL} }

Since $R_\pi(T) \ge \widehat{R}_\pi(T)$ we can establish a lower bound on $\widehat{R}_\pi(T)$. From Lemma~\ref{RegEx} we have
\begin{eqnarray}\label{regi}
\widehat{R}_\pi(T)
&=&  \sum_{i=1}^K\mathbb{E}\tau_i\Delta_i+\sum_{i=1}^K\mathbb{E}\tau_i\Gamma_{i,*}^2-\frac{1}{T}\mathbb{E}[(\sum_{i=1}^K\tau_i(\overline{\mu}_i-\mu_*))^2]+\sigma_*^2.
\end{eqnarray}
A lower bound on $\mathbb{E}[\tau_i]$ is a straightforward consequence of Lemma~\ref{LowMSPr}. By Markov inequality we have
\begin{eqnarray}
\mathbb{E}[\tau_i]\ge\mathbb{P}[\tau_i\ge\frac{c_1\log T}{I(f_i,f_*)}]\frac{c_1\log T}{I(f_i,f_*)}.
\end{eqnarray}
So we can write
\begin{eqnarray}\nn
\lim_{T\rightarrow\infty}\frac{\mathbb{E}[\tau_i]}{\log T}&\ge&\lim_{T\rightarrow\infty}\mathbb{P}[\tau_i\ge\frac{c_1\log T}{I(f_i,f_*)}]\frac{c_1}{I(f_i,f_*)}\\\nn
&=&\frac{c_1}{I(f_i,f_*)}.
\end{eqnarray}
Similarly, by Markov inequality we have, when Assumption 2 is satisfied, there is a $T_0\in \mathbb{N}$ such that for all $T\ge T_0$,
\begin{eqnarray}
\mathbb{E}[\tau_i]\ge \frac{c_1 c_2\log T}{I(f_i,f_*)}.
\end{eqnarray}

For the third term on the RHS of regret expression~\eqref{regi}, following the similar steps as in~\eqref{kkks0}, we have

\begin{eqnarray}\label{koonnn1}
\frac{1}{T}\mathbb{E}[(\sum_{i=1}^K\tau_i(\overline{\mu}_i-\mu_*))^2]
&\le&\sigma_{\max}^2+\frac{1}{T}\mathbb{E}[(\sum_{i=1}^K\tau_i\Gamma_{i,*})^2]+2\sigma_{\max}\sum_{i=1}^K\sqrt{\mathbb{E}\tau_i\Gamma_{i,*}^2}.
\end{eqnarray}

Define the event $\mathcal{E}$ as follows.
$\mathcal{E}:$ for all $k\neq*$, $\tau_k\le T^{\frac{1+\alpha}{2}}$.

For $\sum_{i=1}^K\mathbb{E}\tau_i\Gamma_{i,*}^2-\frac{1}{T}\mathbb{E}[(\sum_{i=1}^K\tau_i\Gamma_{i,*})^2]$, we have

\begin{eqnarray}\label{akh2}
\sum_{i=1}^K\mathbb{E}\tau_i\Gamma_{i,*}^2-\frac{1}{T}\mathbb{E}[(\sum_{i=1}^K\tau_i\Gamma_{i,*})^2]
&=&\sum_{i=1}^K\mathbb{E}[\tau_i\Gamma_{i,*}^2-\frac{1}{T}(\sum_{i=1}^K\tau_i\Gamma_{i,*})^2]\\\label{akh3}
&\ge&\sum_{i=1}^K\mathbb{E}[\tau_i\Gamma_{i,*}^2-\frac{1}{T}(\sum_{i=1}^K\tau_i\Gamma_{i,*})^2,\mathcal{E}]\\\nn
&=&\sum_{i\neq*}\mathbb{E}[\tau_i(\Gamma_{i,*}^2-\frac{1}{T}\sum_{j\neq*}\tau_j\Gamma_{i,*}\Gamma_{j,*}),\mathcal{E}]\\\nn
&\ge&\sum_{i\neq*}\mathbb{E}[\tau_i(\Gamma_{i,*}^2-\frac{1}{T}(K-1)\Gamma^2T^{\frac{1+\alpha}{2}}),\mathcal{E}]\\\label{akh5}
&=&\sum_{i\neq*}\mathbb{E}[\tau_i(\Gamma_{i,*}^2-(K-1)\Gamma^2T^{-\frac{1-\alpha}{2}}),\mathcal{E}].
\end{eqnarray}
Notice that~\eqref{akh3} holds because the argument inside the expectation in~\eqref{akh2} is always positive (similar to~\eqref{Tue2} by Cauchy-Schwartz inequality).

We also have
\begin{eqnarray}\nn
\mathbb{P}[\tau_i\ge\frac{c_1\log T}{I(f_i,f_*)}, \mathcal{E}]&=&\mathbb{P}[\tau_i\ge\frac{c_1\log T}{I(f_i,f_*)}]-\mathbb{P}[\tau_i<\frac{c_1\log T}{I(f_i,f_*)},\overline{ \mathcal{E}}]\\\nn
&\ge&\mathbb{P}[\tau_i\ge\frac{c_1\log T}{I(f_i,f_*)}]-\mathbb{P}[ \overline{ \mathcal{E}}]\\\nn
&\ge& \mathbb{P}[\tau_i\ge\frac{c_1\log T}{I(f_i,f_*)}]-\sum_{i\neq*} \mathbb{P}[\tau_i>T^{\frac{1+\alpha}{2}}]\\\nn
&\ge& \mathbb{P}[\tau_i\ge\frac{c_1\log T}{I(f_i,f_*)}]-\sum_{i\neq*} \frac{\mathbb{E}[\tau_i]}{T^{\frac{1+\alpha}{2}}}\\\label{gayid9}
&\ge&\mathbb{P}[\tau_i\ge\frac{c_1\log T}{I(f_i,f_*)}]-(K-1)T^{-\frac{1-\alpha}{2}}.
\end{eqnarray}
Thus, by Markov inequality
\begin{eqnarray}\nn
\mathbb{E}[\tau_i,\mathcal{E}]&\ge& \mathbb{P}[\tau_i\ge\frac{c_1\log T}{I(f_i,f_*)}, \mathcal{E}]\frac{c_1\log T}{I(f_i,f_*)}\\\nn
&\ge&\mathbb{P}[\tau_i\ge\frac{c_1\log T}{I(f_i,f_*)}]\frac{c_1\log T}{I(f_i,f_*)}-(K-1)T^{-\frac{1-\alpha}{2}}\frac{c_1\log T}{I(f_i,f_*)}
\end{eqnarray}
As a result of Lemma~\ref{LowMSPr}, we have
\begin{eqnarray}\nn
\lim_{T\rightarrow\infty}\frac{\mathbb{E}[\tau_i,\mathcal{E}]}{\log T}&\ge&\lim_{T\rightarrow\infty}\mathbb{P}[\tau_i\ge\frac{c_1\log T}{I(f_i,f_*)}]\frac{c_1}{I(f_i,f_*)}\\\nn
&&~~~-\lim_{T\rightarrow\infty}\frac{c_1(K-1)T^{-\frac{1-\alpha}{2}}}{I(f_i,f_*)}\\\nn
&=&\frac{c_1}{I(f_i,f_*)}.
\end{eqnarray}
Similarly, we have, when Assumption 2 is satisfied, there is a $T_0\in \mathbb{N}$ such that for all $T\ge T_0$,
\begin{eqnarray}
\mathbb{E}[\tau_k,\mathcal{E}]\ge \frac{c_1 c_2\log T}{I(f_i,f_*)}-(K-1)T^{-\frac{1-\alpha}{2}}\frac{c_1\log T}{I(f_i,f_*)}.
\end{eqnarray}

Substituting~\eqref{koonnn1} and~\eqref{akh5} in regret expression we have
\begin{eqnarray}\nn
\widehat{R}_\pi(T)&\ge& \sum_{i=1}^K\mathbb{E}\tau_i\Delta_i +\sum_{i\neq*}\mathbb{E}[\tau_i,\mathcal{E}](\Gamma_{i,*}^2-(K-1)\Gamma^2T^{-\frac{1-\alpha}{2}})\\\nn
&&~~~
-2\sigma_{\max}\sum_{i=1}^K\sqrt{\mathbb{E}\tau_i\Gamma_{i,*}^2}-\sigma_{\max}^2\\\nn
&=& \sum_{i=1}^K\mathbb{E}\tau_i(\Delta_i-\frac{2\sigma_{\max}|\Gamma_{i,*}|}{\sqrt{\mathbb{E}\tau_i}}-\frac{\sigma_{\max}^2}{\mathbb{E}\tau_i})\\\nn
&&~~+\sum_{i\neq*}\mathbb{E}[\tau_i,\mathcal{E}](\Gamma_{i,*}^2-(K-1)\Gamma^2T^{-\frac{1-\alpha}{2}}).
\end{eqnarray}

Substituting the lower bounds on $\mathbb{E}[\tau_i]$ and $\mathbb{E}[\tau_i,\mathcal{E}]$ in the above bound we arrive at

\begin{eqnarray}
\lim\inf_{T\rightarrow\infty}\frac{\widehat{R}_\pi(T)}{\log T} &\ge& \sum_{i\neq*} \frac{c_1}{I(f_i,f_*)}(\Delta_i+\Gamma_{i,*}^2).
\end{eqnarray}
Also, when Assumption 2 is satisfied, for $T\ge T_0$,
\begin{eqnarray}\nn
\widehat{R}_\pi(T) &\ge& \sum_{i\neq*} \frac{c_1c_2\log T }{I(f_i,f_*)}(\Delta_i-\frac{2\sigma_{\max}|\Gamma_{i,*}|}{\sqrt{\frac{c_1c_2\log T }{I(f_i,f_*)}}}-\frac{\sigma_{\max}^2}{\frac{c_1c_2\log T }{I(f_i,f_*)}  })\\\nn
&&~~~+\sum_{i\neq*}(\frac{c_1 c_2\log T}{I(f_i,f_*)}-(K-1)T^{-\frac{1-\alpha}{2}}\frac{c_1\log T}{I(f_i,f_*)})\\\nn
&&~~~~~~~~~~~~~~~(\Gamma_{i,*}^2-(K-1)\Gamma^2T^{-\frac{1-\alpha}{2}}).
\end{eqnarray}
We can rewrite the above lower bound such that for all $T\ge T_1$,
\begin{eqnarray}
\widehat{R}_\pi(T) &\ge& \sum_{i\neq*} \frac{c_1c_2\log T }{I(f_i,f_*)}(\Delta_i+\Gamma_{i,*}^2-\epsilon_{T_1}),
\end{eqnarray}
where $\epsilon_{T_1}$ can be arbitrary small when $T_1$ is large enough. However, precise characterization of $\epsilon_{T_1}$ is tedious and depends on all of the diminishing terms above.

\subsection*{Appendix G: Proof of Lemma~\ref{tauL} }

Let $i\neq *$ be a suboptimal arm and $b_i=\lceil \frac{4b^2 \log T}{\min\{\Delta_i^2,4(2+\rho)^2\}}\rceil$.

\begin{eqnarray}\nn
\tau_i(T) &=& \sum_{t=1}^T \mathbb{I}[\pi(t)=i]\\\nn
&=& b_i+\sum_{t=b_i+1}^T \mathbb{I}[\pi(t)=i, \tau_i(t)\ge b_i]\\
&\le& b_i+\sum_{t=b_i+1}^T \mathbb{I}[\eta_i(t)- \eta_*(t)\le 0, \tau_i(t)\ge b_i].\label{nesfe}
\end{eqnarray}
We can write
\begin{eqnarray}\nn
\eta_i(t)- \eta_*(t) &=& \overline{\xi}_i(t)-b\sqrt{\frac{\log t}{\tau_i (t)}} - \eta_*(t)\\\nn
&=&(\overline{\xi}_i(t)+b\sqrt{\frac{\log t}{\tau_i (t)}} -\xi_i)-(\eta_*(t)-\xi_*)\\
&+& (\xi_i-\xi_*-2b\sqrt{\frac{\log t}{\tau_i (t)}}~).\label{bi}
\end{eqnarray}
For $\tau_i(t)\ge b_i$, the last term in~\eqref{bi} is positive, thus continuing from~\eqref{nesfe}
\begin{eqnarray}\nn
\tau_i(T) &\le& b_i+\sum_{t=b_i+1}^T \mathbb{I}[\overline{\xi}_i(t)+b\sqrt{\frac{\log t}{\tau_i (t)}} -\xi_i\le 0, \tau_i(t)\ge b_i]\\\nn
&+&\sum_{t=b_i+1}^T \mathbb{I}[\eta_*(t)-\xi_* \ge 0 ].
\end{eqnarray}
Applying Lemma~\ref{LemmaChernoff}
\begin{eqnarray}\nn
\mathbb{E}[\tau_i(T)] &\le& b_i + 2\sum_{t=b_i+1}^T t\exp(-\frac{a b^2 \log t}{(2+\rho)^2})+ 2\sum_{t=b_i+1}^T t\exp(-\frac{a b^2 \log t}{(1+\rho)^2})\\\nn
&\le& \frac{4b^2 \log T}{\min\{\Delta_i^2,4(2+\rho)^2\}} + 1 + 4\int_{b_i}^\infty t^{-2} dt \\\nn
&=& \frac{4b^2 \log T}{\min\{\Delta_i^2,4(2+\rho)^2\}} + 1 + 4 b_i^{-1}\\
&\le& \frac{4b^2 \log T}{\min\{\Delta_i^2,4(2+\rho)^2\}} + 5.
\end{eqnarray}

\subsection*{Appendix H: Proof of Theorem~\ref{T2} }\label{AppF}
Considering the regret expression in Lemma~\ref{RegEx}
\begin{eqnarray}\nn
\widehat{R}_{MV-UCB}(T) &=& \sum_{i=1}^K\mathbb{E}\tau_i\Delta_i+\sum_{i=1}^K\mathbb{E}\tau_i\Gamma_{i,*}^2-\frac{1}{T}\mathbb{E}[(\sum_{i=1}^K\tau_i(\overline{\mu}_i-\mu_*))^2]+\sigma_*^2\\\nn
&\le&\sum_{i\neq*}\mathbb{E}\tau_i(\Delta_i+\Gamma_{i,*}^2)+\sigma_*^2\\\nn
&\le&\sum_{i\neq*}(\frac{4b^2 \log T}{\min\{\Delta_i^2,4(2+\rho)^2\}} + 5)(\Delta_i+\Gamma_{i,*}^2)+\sigma_*^2.
\end{eqnarray}
From Theorem~\ref{TheKtarin}, we have
\begin{eqnarray}\nn
&&R_{MV-UCB}(T) \le \sum_{i\neq*}(\frac{4b^2 \log T}{\min\{\Delta_i^2,4(2+\rho)^2\}} + 5)(\Delta_i+\Gamma_{i,*}^2)+\sigma_*^2+\min\{\sigma_{\max}^2(\sum_{i\neq*}\frac{\Gamma_{i,*}^2}{\Delta_i}+1),\frac{K}{a}\log T\}.
\end{eqnarray}



\subsection*{Appendix I: Proof of Theorem~\ref{Low1} }\label{AppB}

The following lemma is used in the proof of the theorem.
\begin{lemma}[\cite{BR}]\label{lm6}
Let $\nu_0$, $\nu_1$ be two probability measures supported on some set
$\mathcal{X}$, with $\nu_1$ absolutely continuous with respect to $\nu_0$. Then, for any measurable
function $ \phi : \mathcal{X} \rightarrow \{0, 1\}$,
\begin{eqnarray}
\mathbb{P}_{\nu_1}[\phi(X)=0]+\mathbb{P}_{\nu_0}[\phi(X)=1]\ge \frac{1}{2}\exp(-I(\nu_0,\nu_1)).
\end{eqnarray}

\end{lemma}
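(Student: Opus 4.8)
The plan is to establish this as the classical Bretagnolle--Huber testing inequality, by first reducing the left-hand side to the \emph{overlap} of the two measures and then lower bounding that overlap via the Hellinger affinity together with Jensen's inequality. Writing $A = \{x : \phi(x) = 1\}$, the left-hand side is exactly $\nu_0(A) + \nu_1(A^c)$, so it suffices to lower bound this quantity for an arbitrary measurable set $A$. I would introduce a common dominating measure $\mu = \nu_0 + \nu_1$ and set $p = d\nu_0/d\mu$ and $q = d\nu_1/d\mu$. If $I(\nu_0,\nu_1) = \infty$ the right-hand side equals zero and there is nothing to prove, so I may assume $I(\nu_0,\nu_1) < \infty$, which in particular forces $q > 0$ holding $\nu_0$-a.e. (otherwise $\log(p/q) = +\infty$ on a set of positive $\nu_0$-measure).

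First I would establish the pointwise-overlap bound
\[
\nu_0(A) + \nu_1(A^c) = \int_A p\,d\mu + \int_{A^c} q\,d\mu \ge \int_A \min(p,q)\,d\mu + \int_{A^c}\min(p,q)\,d\mu = \int \min(p,q)\,d\mu,
\]
using $p \ge \min(p,q)$ on $A$ and $q \ge \min(p,q)$ on $A^c$. This isolates the overlap $m \defeq \int \min(p,q)\,d\mu \in [0,1]$, which no longer depends on $\phi$, so it remains to show $m \ge \tfrac{1}{2}\exp(-I(\nu_0,\nu_1))$.

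Next I would lower bound $m$ through the Hellinger affinity $\rho \defeq \int \sqrt{pq}\,d\mu$. Writing $\sqrt{pq} = \sqrt{\min(p,q)}\,\sqrt{\max(p,q)}$, Cauchy--Schwarz gives $\rho^2 \le \big(\int \min(p,q)\,d\mu\big)\big(\int \max(p,q)\,d\mu\big) = m(2-m)$, since $\int(\min+\max)\,d\mu = \int(p+q)\,d\mu = 2$. Rewriting $m(2-m) = 1-(1-m)^2 \ge \rho^2$ yields $1-m \le \sqrt{1-\rho^2}$, and the elementary inequality $\sqrt{1-x}\le 1-x/2$ (valid since $(1-x/2)^2 \ge 1-x$) then gives $m \ge \tfrac{1}{2}\rho^2$. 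Finally I would bound $\rho$ below by Jensen's inequality applied to the convex exponential: since $\nu_0(\{p>0\})=1$ and the contribution of $\{p=0\}$ to $\rho$ vanishes,
\[
\rho = \mathbb{E}_{\nu_0}\Big[\sqrt{q/p}\Big] = \mathbb{E}_{\nu_0}\Big[\exp\big(\tfrac{1}{2}\log(q/p)\big)\Big] \ge \exp\Big(\tfrac{1}{2}\mathbb{E}_{\nu_0}\big[\log(q/p)\big]\Big) = \exp\big(-\tfrac{1}{2}I(\nu_0,\nu_1)\big),
\]
so $\rho^2 \ge \exp(-I(\nu_0,\nu_1))$. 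Chaining the three bounds gives $\nu_0(A)+\nu_1(A^c) \ge m \ge \tfrac12\rho^2 \ge \tfrac12\exp(-I(\nu_0,\nu_1))$, which is the claim.

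The main obstacle is measure-theoretic bookkeeping rather than any sharp estimate: I must verify that the densities are well defined on the relevant sets, that the rewriting of $\rho$ as an $\mathbb{E}_{\nu_0}$-expectation is legitimate (the set $\{p=0\}$ carries no $\nu_0$-mass and contributes nothing to $\sqrt{pq}$), and that the degenerate case $I(\nu_0,\nu_1)=\infty$ is dispatched at the outset so that $\log(q/p)$ remains $\nu_0$-integrable in the Jensen step. Everything else reduces to the two elementary inequalities (Cauchy--Schwarz and $\sqrt{1-x}\le 1-x/2$) together with Jensen's inequality.
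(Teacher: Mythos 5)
The paper offers no proof of this lemma at all: it is imported verbatim from the cited reference~\cite{BR}, so there is no in-paper argument to compare against. Your proof is correct and complete, and it is essentially the standard Bretagnolle--Huber argument found in that reference (and in Tsybakov's textbook): reduce the testing sum $\nu_0(A)+\nu_1(A^c)$ to the overlap $m=\int\min(p,q)\,d\mu$, relate $m$ to the Hellinger affinity $\rho=\int\sqrt{pq}\,d\mu$ by Cauchy--Schwarz, and lower bound $\rho$ by $\exp(-\tfrac12 I(\nu_0,\nu_1))$ via Jensen, with the degenerate case $I(\nu_0,\nu_1)=\infty$ and the null set $\{p=0\}$ handled correctly. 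One minor streamlining: the detour through $m(2-m)=1-(1-m)^2$ and the inequality $\sqrt{1-x}\le 1-x/2$ is dispensable, since from $\rho^2\le m(2-m)$ and $2-m\le 2$ you obtain $m\ge\rho^2/2$ in a single step, which is exactly how the cited source argues.
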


To prove Theorem~\ref{Low1}, two different sets of distributions are assigned to a two-armed bandit. Then it is shown that under at least one of these two sets of distributions~\eqref{20m} holds.
Consider a two-armed bandit. Let $f_1 \sim \mathcal{N}(\mu_1,\sigma_1^2)$, a normal distribution with mean $\mu_1=\frac{3}{4}$ and variance $\sigma_1^2=\frac{3}{16}-4\Delta^2$. Also, let $f_2\sim\mathcal{B}(p)$, a Bernolli distribution with $p=1/4+2\Delta$ and $f'_2\sim\mathcal{B}(q)$ with $q=1/4-2\Delta$. Denote $\mathcal{F} = (f_1, f_2$) and $\mathcal{F}' = (f_1, f'_2)$.
For the simplicity of presentation let us assume $\rho=0$. Note that for the difference between the variance of above distributions we have $\sigma^2_{2}-\sigma^2_{1}=\Delta$ and $\sigma^2_{1}-\sigma'^2_{2}=\Delta$. Since $\widehat{R}_\pi(T)\le{R}_\pi(T)$ we can establish a lower bound on $\widehat{R}_\pi(T)$ that is also a lower bound on ${R}_\pi(T)$. From Lemma~\ref{RegEx}


{{\begin{eqnarray}\nn
\widehat{R}_\pi(T) &=& \sum_{i=1}^K\mathbb{E}\tau_i\Delta_i+\sum_{i=1}^K\mathbb{E}\tau_i\Gamma_{i,*}^2-\frac{1}{T}\mathbb{E}[(\sum_{i=1}^K\tau_i(\overline{\mu}_i-\mu_*))^2]+\sigma_*^2
\end{eqnarray}}}
Following the similar lines as the proof of~\eqref{kkks1} we show
{{\begin{eqnarray}\nn
\widehat{R}_\pi(T) &\ge& \sum_{i=1}^2\mathbb{E}\tau_i\Delta_i-\frac{2}{a}(\log T+2).
\end{eqnarray}}}

Using Lemma~\ref{lm6} through a coupling argument we establish a lower bound on the regret under one of the two systems.

Let us use the notations $R_\pi(T;\mathcal{F})$ and $R_\pi(T;\mathcal{F}')$ to distinguish between the regrets under distribution assignments $\mathcal{F}$ and $\mathcal{F}'$, respectively. Also, let ${f}^{(t)}$ and ${f}'^{(t)}$ denote the distribution of the reward process up to time $t$ under $\mathcal{F}$ and $\mathcal{F}'$, respectively. Spcifically,
\begin{eqnarray}\nn
f^{(t)}(x(1),x(2),...,x(t))=\Pi_{\{s: \pi(s)=1\}}f_1(x(s))\Pi_{\{s: \pi(s)=2\}}f_2(x(s))
\end{eqnarray}
and ${f}'^{(t)}$ is defined similarly under $\mathcal{F}'$.
{{\begin{eqnarray}\nn
&&\hspace{-5em}\max(\widehat{R}_\pi(T;\mathcal{F}),\widehat{R}_\pi(T;\mathcal{F}'))\\\nn&\ge& \frac{1}{2}(\widehat{R}_\pi(T;\mathcal{F}))+\widehat{R}_\pi(T;\mathcal{F}')) \\\nn
&\ge&\frac{\Delta}{2}\sum_{t=1}^T(\mathbb{P}_\mathcal{F}[\pi(t)=2]+\mathbb{P}_{\mathcal{F}'}[\pi(t)=1])-\frac{2}{a}(\log T+ 2)\\
&\ge&\frac{\Delta}{2}\sum_{t=1}^{T}(\mathbb{P}_\mathcal{F}[\pi(t)=2]+\mathbb{P}_{\mathcal{F}'}[\pi(t)=1])-\frac{2}{a}(\log T+ 2)\\ \label{max2}
&\ge&\frac{\Delta}{4}\sum_{t=1}^{T}\exp(-I(f^{(t)},f'^{(t)}))-\frac{2}{a}(\log T+ 2).\label{56s}
\end{eqnarray}}}
The KL-divergence between $f^{(t)}$ and $f'^{(t)}$ equals to
{{\begin{eqnarray}\nn
I(f^{(t)},f'^{(t)})&=&\mathbb{E}_\mathcal{F}[\log{ \frac{\Pi_{\{s:\pi(s)=2\}}f_2(X_{\pi(s)}(s))}{\Pi_{\{s:\pi(s)=2\}}f'_2(X_{\pi(s)}(s) )} }]\\\nn
&=&\mathbb{E}_\mathcal{F}[\sum_{s=1}^{\tau_2(t)}(p\log{\frac{p}{q}}+(1-p)\log{\frac{1-p}{1-q}})]\\\nn\label{KL}
&=&\mathbb{E}_\mathcal{F}\tau_2(t)(p\log{\frac{p}{q}}+(1-p)\log{\frac{1-p}{1-q}})\\
&\le&\mathbb{E}_\mathcal{F}\tau_2(t)d_0\Delta^2.\label{57s}
\end{eqnarray}}}
for some constant $d_0$. Substituting~\eqref{57s} in~\eqref{56s}
{{\begin{eqnarray}\nn
&&{\hspace{-5em}}\max(\widehat{R}_\pi(T;\mathcal{F}),\widehat{R}_\pi(T;\mathcal{F}'))\\
&\ge& \frac{\Delta}{4}T\exp(-\mathbb{E}_\mathcal{F}\tau_2(T)d_0\Delta^2)-\frac{2}{a}(\log T+ 2).\label{58z}
\end{eqnarray}}}

Following the similar lines as the proof of~\eqref{Tue1}, we show that

\begin{eqnarray}\nn
\widehat{R}_\pi(T)&\ge& \sum_{i=1}^2\mathbb{E}\tau_i\Delta_i+\sum_{i=1}^2\mathbb{E}\tau_i\Gamma_{i,*}^2 -\frac{1}{T}\mathbb{E}[(\sum_{i=1}^2\tau_i\Gamma_{i,*})^2]\\\nn
&&~~~-\sigma_{\max}^2-2\sigma_{\max}\sqrt{\frac{1}{T}\mathbb{E}[(\sum_{i=1}^2\tau_i\Gamma_{i,*})^2]}.
\end{eqnarray}

We can write
\begin{eqnarray}\nn
\widehat{R}_\pi(T;\mathcal{F})
&\ge&\mathbb{E}_{\mathcal{F}}\tau_2\Delta+\mathbb{E}_{\mathcal{F}}\tau_2\Gamma^2-\frac{1}{T}\mathbb{E}_{\mathcal{F}}[\tau_2^2]\Gamma^2-\sigma_{\max}^2-2\sigma_{\max}\sqrt{\frac{1}{T}\mathbb{E}_{\mathcal{F}}[\tau_2^2]\Gamma^2}\\\label{kard}
&=&\mathbb{E}_{\mathcal{F}}\tau_2\Delta+\frac{1}{T}\mathbb{E}_{\mathcal{F}}[\tau_1\tau_2]\Gamma^2-\sigma_{\max}^2-2\sigma_{\max}\sqrt{\mathbb{E}_{\mathcal{F}}[\tau_2]\Gamma^2}.
\end{eqnarray}

For the first term on the right hand side of~\eqref{kard}, we have, for a constant $0<d_3<1$
{{\begin{eqnarray}\nn
\frac{1}{T}\mathbb{E}_{\mathcal{F}}[\tau_1(T) \tau_2(T) ]&\ge&\frac{T-d_3T}{T}\mathbb{E}_{\mathcal{F}}[\tau_2 ,\tau_2\le d_3T]\\\label{spokh}
&\ge& \frac{1}{2}\frac{T-d_3 T}{T}\mathbb{E}_{\mathcal{F}}[\tau_2]\\\label{spokh73}
&=&d_4\mathbb{E}_{\mathcal{F}}[\tau_2],
\end{eqnarray}}}
where $d_4=\frac{1-d_3}{2}$.
To arrive at~\eqref{spokh}, notice that we have
\begin{eqnarray}
\mathbb{E}_{\mathcal{F}}[\tau_2 ,\tau_2> d_3T]\le\mathbb{E}_{\mathcal{F}}[\tau_2 ,\tau_2\le d_3T],
\end{eqnarray}
otherwise $\mathbb{E}_{\mathcal{F}}[\tau_2]$ is linear with time and we arrive at the theorem. We show that $\mathbb{E}_{\mathcal{F}}[\tau_2 ,\tau_2> d_3T]\le\mathbb{E}_{\mathcal{F}}[\tau_2 ,\tau_2\le d_3T]$ translates to $\mathbb{E}_{\mathcal{F}}[\tau_2, \tau_2\le d_3T]\ge \frac{1}{2}\mathbb{E}[\tau_2]$.
{{\begin{eqnarray}\nn
&&\hspace{-3em}\mathbb{E}_{\mathcal{F}}[\tau_2]-\mathbb{E}[\tau_2 ,\tau_2> d_3T]\\\nn
&=&\mathbb{E}_{\mathcal{F}}[\tau_2 ,\tau_2\le d_3T]+\mathbb{E}_{\mathcal{F}}[\tau_2 ,\tau_2>d_3T]-\mathbb{E}_{\mathcal{F}}[\tau_2 ,\tau_2> d_3T]\\\nn
&\ge& \mathbb{E}_{\mathcal{F}}[\tau_2 ,\tau_2\le d_3T]+\mathbb{E}_{\mathcal{F}}[\tau_2 ,\tau_2> d_3T]-\frac{1}{2}\mathbb{E}_{\mathcal{F}}[\tau_2 ,\tau_2> d_3T]-\frac{1}{2}\mathbb{E}_{\mathcal{F}}[\tau_2 ,\tau_2\le d_3T]\\\nn
&=&\frac{1}{2}(\mathbb{E}_{\mathcal{F}}[\tau_2 ,\tau_2\le d_3T]+\mathbb{E}_{\mathcal{F}}[\tau_2 ,\tau_2> d_3T])\\\nn
&=&\frac{1}{2}\mathbb{E}_{\mathcal{F}}[\tau_2].
\end{eqnarray}}}

By~\eqref{kard} and~\eqref{spokh73}, we can write
\begin{eqnarray}\nn
\max(\widehat{R}_\pi(T;\mathcal{F}),\widehat{R}_\pi(T;\mathcal{F}'))&\ge&\widehat{R}_\pi(T;\mathcal{F})\\\label{khah1}
&\ge&\mathbb{E}_{\mathcal{F}}\tau_2\Delta+d_4\mathbb{E}_{\mathcal{F}}\tau_2\Gamma^2-\sigma_{\max}^2-2\sigma_{\max}\sqrt{\mathbb{E}_{\mathcal{F}}[\tau_2]\Gamma^2}.
\end{eqnarray}

For brevity of notation, let $x\triangleq\mathbb{E}_\mathcal{F}\tau_2$. From~\eqref{58z} and~\eqref{khah1} (by taking average over two lower bounds) we have, for $T\ge T_0$ for some large enough $T_0\in\mathbb{{N}}$
\begin{eqnarray}\nn
\max(R_\pi(T;\mathcal{F}),R_\pi(T;\mathcal{F}'))&\ge&\frac{1}{2}\{\frac{T\Delta}{4}\exp(-\mathbb{E}_\mathcal{F}\tau_2d_0\Delta^2)-\frac{2}{a}(\log T+2)+\mathbb{E}_{\mathcal{F}}\tau_2\Delta \\\nn &&~~~+d_4\mathbb{E}_{\mathcal{F}}\tau_2\Gamma^2-\sigma_{\max}^2-2\sigma_{\max}\sqrt{\mathbb{E}_{\mathcal{F}}\tau_2\Gamma^2}\}\\\nn
&\ge&\min_{x\ge0}\frac{1}{2}\{\frac{T\Delta}{4}\exp(-x d_0\Delta^2)+d_4x\Gamma^2\}\\\nn
&&~~~+\min_{x\ge0}\frac{1}{2}\{x\Delta-2\sigma_{\max}\sqrt{x\Gamma^2}\}\\\nn
&&~~~-\frac{1}{a}(\log T+2)-\frac{1}{2}\sigma_{\max}^2\\\nn
&=&\frac{d_2\Gamma^2}{2d_0\Delta^2}(\log\frac{T d_0\Delta^3}{4d_4\Gamma^2}+1)\\\label{60cd}
&&~~~-\frac{\sigma_{\max}^2\Gamma^2}{2\Delta}-\frac{1}{a}(\log T+2)-\frac{1}{2}\sigma_{\max}^2\\\nn
&\ge&\frac{c_4\log T}{\Delta^2}.
\end{eqnarray}

Substituting $\Delta$ with $d_6T^{-1/3}$ in~\eqref{60cd} for a constant $d_6$ that satisfies $\frac{d_0d_6^3}{4d_4\Gamma^2}>1$, we have, for some constant $c_3>0$,
\begin{eqnarray}
R_\pi(T) \ge c_3 T^{2/3}.
\end{eqnarray}
\par In this proof, for the purpose of presentation, it is assumed $\rho=0$. For $\rho\neq 0$ the same proof holds with modified assignment of distributions. The assignment of distributions are as follows.
For $\rho\neq\frac{1}{2}$, let $p=\frac{1}{4}+2\delta$, $q=\frac{1}{4}-2\delta$, $\mu_1=\frac{3}{4}$ and $\sigma_1^2=\frac{3}{16}-4\delta^2+\frac{\rho}{2}$. For $\rho=\frac{1}{2}$, let $p=\frac{1}{3}+3\delta$, $q=\frac{1}{3}-3\delta$, $\mu_1=\frac{5}{6}$ and $\sigma_1^2=\frac{17}{36}-9\delta^2$.

\begin{biography}%
[{\includegraphics[width=0.8in]{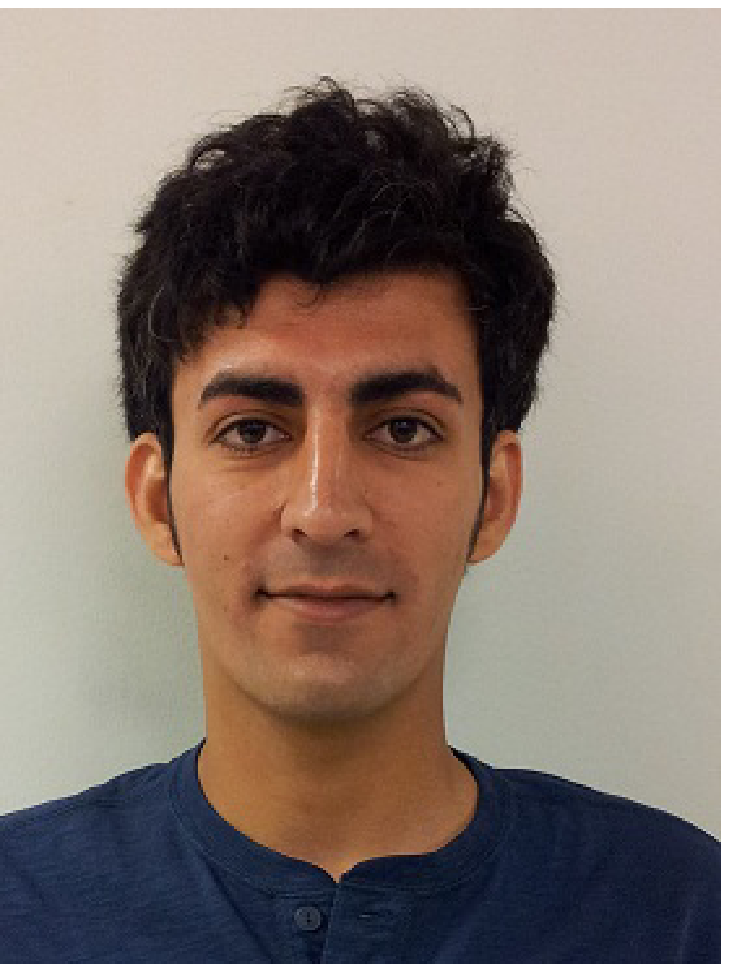}}]
{Sattar Vakili}
received the B.S. degree in electrical engineering from Sharif University of Technology, Tehran, in 2011, and M.S. degree in electrical and computer engineering from University of California, Davis, in 2013.
He is currently a Ph.D. student at Cornell University. His research interests include stochastic online learning, decision theory, and communication and social-economic networks.
\end{biography}

\begin{biography}%
[{\includegraphics[width=0.8in]{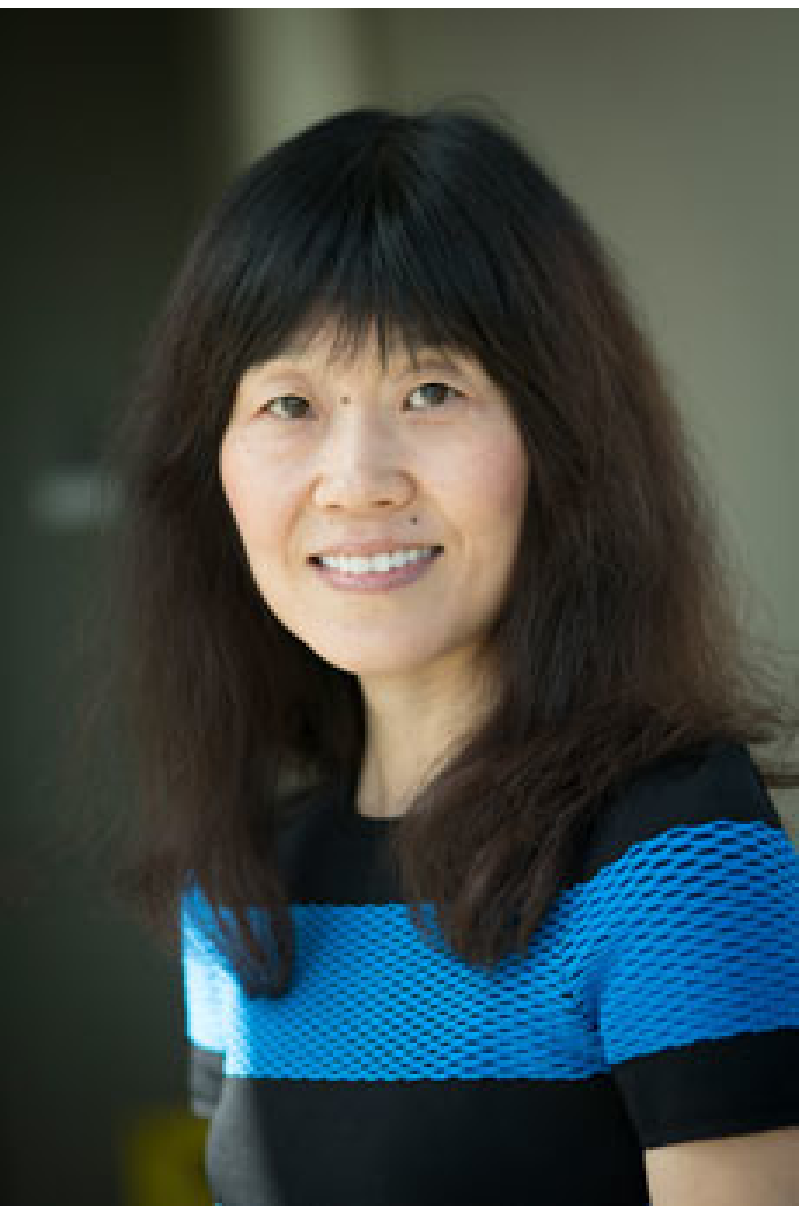}}]
{Qing Zhao}
(F'13) joined the School of Electrical and Computer Engineering at Cornell University in 2015 as a Professor. Prior to that, she was a Professor at University of California, Davis. She received the Ph.D. degree in Electrical Engineering in 2001 from Cornell University. Her research interests are in the general area of stochastic optimization, decision theory, machine learning, and algorithmic theory in dynamic systems and communication and social-economic networks. She received the 2010 IEEE Signal Processing Magazine Best Paper Award and the 2000 Young Author Best Paper Award from the IEEE Signal Processing Society. \end{biography}

\end{document}